\documentclass{article}

\usepackage{microtype}
\usepackage{graphicx}
\usepackage{subcaption}
\usepackage{booktabs} % for professional tables
\usepackage{multirow}
\usepackage{soul}      % for \hl{}
\usepackage{tcolorbox} % for prompt box

\usepackage{xcolor}
\usepackage{tcolorbox}
\usepackage{tikz}

\usepackage{hyperref}

\usepackage[preprint]{icml2026}

\usepackage{amsmath}
\usepackage{amssymb}
\usepackage{mathtools}
\usepackage{amsthm}

\usepackage[capitalize,noabbrev]{cleveref}

\usepackage{amsmath,amsfonts,bm}

\def\eqref#1{equation~\ref{#1}}
\def\1{\bm{1}}
\DeclareMathAlphabet{\mathsfit}{\encodingdefault}{\sfdefault}{m}{sl}
\SetMathAlphabet{\mathsfit}{bold}{\encodingdefault}{\sfdefault}{bx}{n}

\usepackage{amsmath}

\newcommand{\x}{x}

\newcommand{\axi}[1][i]{\x_{#1}}

\newcommand{\pseqxn}[1][n]{(\axi[i])_{i\geq 1}} % sequence with parentheses
\newcommand{\pseqxnn}[1][n]{(\axi[i])_{i=1}^n} % sequence with parentheses

\newcommand{\Prob}{\ensuremath{{\mathbb{P}}}}

\def\balign#1\ealign{\begin{align}#1\end{align}}
\def\baligns#1\ealigns{\begin{align*}#1\end{align*}}
\def\balignat#1\ealign{\begin{alignat}#1\end{alignat}}
\def\balignats#1\ealigns{\begin{alignat*}#1\end{alignat*}}
\def\bitemize#1\eitemize{\begin{itemize}#1\end{itemize}}
\def\benumerate#1\eenumerate{\begin{enumerate}#1\end{enumerate}}

\newenvironment{talign*}
 {\csname align*\endcsname}
 {\endalign}
\newenvironment{talign}
 {\csname align\endcsname}
 {\endalign}

\def\balignst#1\ealignst{\begin{talign*}#1\end{talign*}}
\def\balignt#1\ealignt{\begin{talign}#1\end{talign}}
\newcommand{\qtext}[1]{\quad\text{#1}\quad}

\let\originalleft\left
\let\originalright\right
\renewcommand{\left}{\mathopen{}\mathclose\bgroup\originalleft}
\renewcommand{\right}{\aftergroup\egroup\originalright}

\def\tinycitep*#1{{\tiny\citep*{#1}}}
\def\tinycitealt*#1{{\tiny\citealt*{#1}}}
\def\tinycite*#1{{\tiny\cite*{#1}}}
\def\smallcitep*#1{{\scriptsize\citep*{#1}}}
\def\smallcitealt*#1{{\scriptsize\citealt*{#1}}}
\def\smallcite*#1{{\scriptsize\cite*{#1}}}

\def\mbf#1{\mathbf{#1}}
\def\mbb#1{\mathbb{#1}}
\def\mc#1{\mathcal{#1}}
\def\mrm#1{\mathrm{#1}}

\def\<{\left\langle} % Angle brackets
\def\>{\right\rangle}

\def\implies{\quad\Longrightarrow\quad}
\def\E{\mbb{E}} % Expectation symbol

\providecommand{\argmin}{\mathop\mathrm{arg min}}

\theoremstyle{plain}
\newtheorem{theorem}{Theorem}[section]

\newtheorem{lemma}[theorem]{Lemma}

\theoremstyle{definition}

\theoremstyle{remark}
\newtheorem{remark}[theorem]{Remark}

\usepackage[textsize=tiny]{todonotes}

\icmltitlerunning{GOPO: Policy optimization using ranked rewards}

\begin{document}

\twocolumn[
  \icmltitle{GOPO: Policy Optimization using Ranked Rewards}

  % It is OKAY to include author information, even for blind submissions: the
  % style file will automatically remove it for you unless you've provided
  % the [accepted] option to the icml2026 package.

  % List of affiliations: The first argument should be a (short) identifier you
  % will use later to specify author affiliations Academic affiliations
  % should list Department, University, City, Region, Country Industry
  % affiliations should list Company, City, Region, Country

  % You can specify symbols, otherwise they are numbered in order. Ideally, you
  % should not use this facility. Affiliations will be numbered in order of
  % appearance and this is the preferred way.
  \icmlsetsymbol{equal}{$\star$}

  \begin{icmlauthorlist}
    \icmlauthor{Kyuseong Choi}{equal,yyy}
    \icmlauthor{Dwaipayan Saha}{equal,xxx}
    \icmlauthor{Woojeong Kim}{equal,yyy}
    \icmlauthor{Anish Agarwal}{xxx}
    \icmlauthor{Raaz Dwivedi}{yyy}
  \end{icmlauthorlist}

  \icmlaffiliation{yyy}{Cornell Tech, Cornell University, NY, United States}
  \icmlaffiliation{xxx}{Columbia University, NY, United States}
  % \icmlaffiliation{comp}{Company Name, Location, Country}
  % \icmlaffiliation{sch}{School of ZZZ, Institute of WWW, Location, Country}

  \icmlcorrespondingauthor{Kyuseong Choi}{kc728@cornell.edu}

  % You may provide any keywords that you find helpful for describing your
  % paper; these are used to populate the "keywords" metadata in the PDF but
  % will not be shown in the document
  \icmlkeywords{}

  \vskip 0.3in
]

% this must go after the closing bracket ] following \twocolumn[ ...

% This command actually creates the footnote in the first column listing the
% affiliations and the copyright notice. The command takes one argument, which
% is text to display at the start of the footnote. The \icmlEqualContribution
% command is standard text for equal contribution. Remove it (just {}) if you
% do not need this facility.

% Use ONE of the following lines. DO NOT remove the command.
% If you have no special notice, KEEP empty braces:
\printAffiliationsAndNotice{}  % no special notice (required even if empty)
% Or, if applicable, use the standard equal contribution text:
% \printAffiliationsAndNotice{\icmlEqualContribution}

\begin{abstract}
  % Standard reinforcement learning from human feedback (RLHF) trains a reward model on pairwise preference data and then uses it for policy optimization. However, while the reward model is optimized for relative ranking, existing policy optimization techniques use absolute rewards during training. For settings where the rewards are non-verifiable—e.g., summarization, instruction following and chat completion—this misalignment often leads to suboptimal performance. We introduce GOPO, a policy optimization method that uses only the ranking of the rewards and discards their magnitudes. Our rank-based transformation of rewards provides several gains, compared to GRPO, in settings with non-verifiable rewards: (1) consistently higher training reward trajectories, (2) superior test evaluations at its best checkpoint, and (3) a fast convergence to its best checkpoint. We demonstrate the dominance of GOPO over GRPO across a suite of tasks and model sizes.
  Standard reinforcement learning from human feedback (RLHF) trains a reward model on pairwise preference data and then uses it for policy optimization. However, while reward models are optimized to capture relative preferences, existing policy optimization techniques rely on absolute reward magnitudes during training. In settings where the rewards are non-verifiable—such as summarization, instruction following, and chat completion—this misalignment often leads to suboptimal performance. We introduce \emph{Group Ordinal Policy Optimization}~(GOPO), a policy optimization method that uses only the ranking of the rewards and discards their magnitudes. Our rank-based transformation of rewards provides several gains, compared to Group Relative Policy Optimization (GRPO), in settings with non-verifiable rewards: (1) consistently higher training/validation reward trajectories, (2) improved LLM-as-judge evaluations across most intermediate training steps, and (3) reaching a policy of comparable quality in substantially less training steps than GRPO. We demonstrate consistent improvements across a range of tasks and model sizes. Our source code is available at \url{https://github.com/friendshipkim/gopo}.
  % Our source code is available at \url{https://anonymous.4open.science/r/gopo-1BFA/README.md}.
\end{abstract}

\newcommand{\rankpolicy}{\pi_{\mrm{rank}}}
\newcommand{\originalpolicy}{\pi_{\mrm{grpo}}}
\newcommand{\pizero}{\pi_{\mrm{ref}}}
\newcommand{\piold}{\pi_{\mrm{old}}}
\newcommand{\pinew}{\pi_{\mrm{new}}}

% \textcolor{blue}{\textbf{TODO}
% \begin{itemize}
%     \item Experiments with Qwen policy model family (\texttt{Qwen3-7B}) using \texttt{UltraChat} dataset and \texttt{TLDR} using \texttt{QRM} (\texttt{Llama-8B} based) and \texttt{SkyWalk} (\texttt{Qwen3-8B} based) reward models respectively.
%     % \item Softmax on the rewards. This is an opposite extreme we can apply on the z-scored rewards 
%     \item Need to think how to visualize our results (figures for proposed method \& numerical results)
% \end{itemize}
% }

\section{Introduction}

% \textcolor{blue}{KC: PPO uses Reward models for non-verifiable tasks as is }

\newcommand{\pigopo}{\pi_{\mrm{gopo}}^\star}
\newcommand{\pigrpo}{\pi_{\mrm{grpo}}^\star}

\iffalse
\begin{itemize}
    \item Significance of policy optimization (PO) in literature,
    -- PPO
    -- DPO 
    -- GRPO.
    \item How each PO differ from another. 
    \item GRPO originally introduced for verifiable tasks --- literature on how it was used for non-verifiable tasks. 
    \item Potential limitations for existing GRPO methods on non-verifiable tasks.
    \item How our method address this limitation.
    \textcolor{blue}{(3) should be changed --- robust }
\end{itemize}
\fi

Large language models (LLMs) are trained on a massive collection of diverse datasets; as a result, LLMs acquire a wide variety of goals and skills. When using an LLM for a specific task (e.g., summarization of text), some of its goals and skills are more desirable than others. Therefore, we hope to optimally select a subset of goals and skills of an LLM. Existing methods steer the LLM to align human preferences using reinforcement learning (RL), one of which is GRPO. 

GRPO approximates the actor and critic required for Proximal Policy Optimization (PPO)~\cite{schulman2017proximal} by redefining the advantage as a $z$-score of resampled rewards, making it more amenable for computation. GRPO has been shown to be particularly successful in strengthening the reasoning capabilities of LLMs~\cite{shao2024deepseekmath}, where each response is evaluated as either correct or incorrect~(i.e., binary reward). GRPO has been less explored for non-verifiable tasks that rely on external reward models. In this paper, we adapt the GRPO algorithm for non-verifiable tasks by redefining their $z$-score advantage. Our advantage is a rank-based one, which essentially discards everything but the ordering of the rewards. 

The most straightforward way to elicit desired behaviors from LLMs is supervised fine-tuning (SFT) on high quality human responses. Meanwhile, the most widely used alignment approaches are based on reinforcement learning from human feedback. Here, they first train a reward model to capture pairwise human preferences, and then optimize the LLM as a policy to maximize this learned reward, often using PPO. GRPO was introduced as an alternative to PPO and has shown great success in guiding LLMs to excel at reasoning and mathematical tasks. In such verifiable settings, a learned reward model is often unnecessary because each prompt has a well-defined correct answer.

\begin{figure}
    \centering
    \includegraphics[width=1\linewidth]{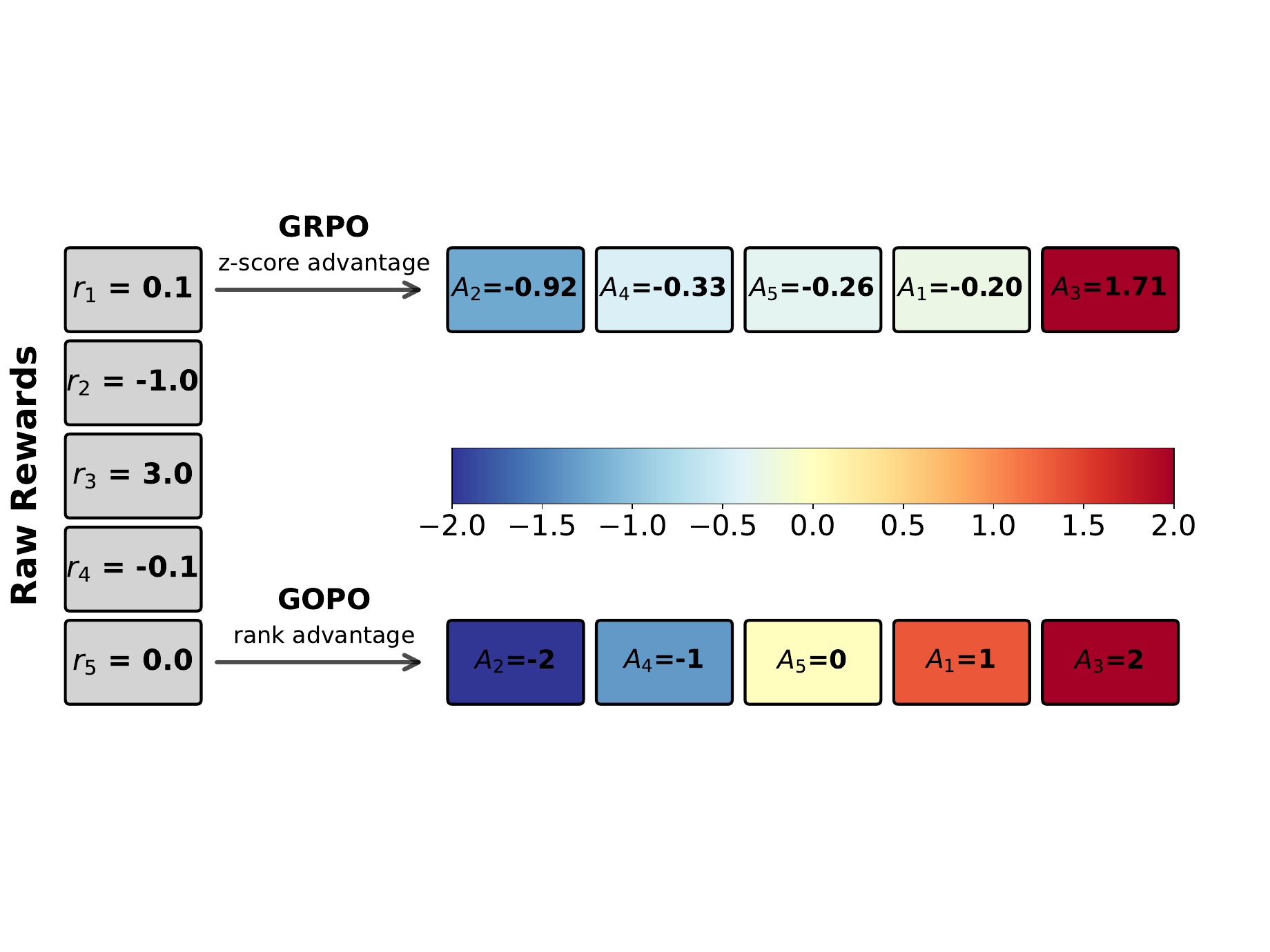}
    \caption{\textbf{GOPO vs. GRPO advantage transformations.} For a fixed prompt with rewards $\{r_i\}$, GRPO uses a $z$-score transformation that centers and scales rewards within the group, while GOPO uses a rank-based transformation that retains only the ordering. $z$-score advantages preserve relative magnitudes among rewards (e.g., similar colors for $A_1, A_4, A_5$ reflect similar raw-reward affinities), whereas rank-based advantages discard scale and can assign different heat levels to rewards with similar magnitudes.}
    \label{fig:std}
\end{figure}

In this paper, we introduce \emph{Group Ordinal Policy Optimization} (GOPO), an alternative to GRPO that removes sensitivity to the noisy and poorly calibrated \emph{magnitude} of reward-model scores by using only their within-prompt \emph{rank order}. This design is motivated by how reward models for non-verifiable tasks are typically trained: Bradley--Terry-style pairwise objectives primarily learn \emph{relative} preferences, so comparisons (which response is better) are often more reliable than absolute score differences. GOPO therefore discards interval-scale information and injects only ordinal information into the RL update, yielding more stable learning and faster improvement. Empirically, GOPO provides consistently stronger guidance than GRPO, improving training reward trajectories and policy quality across training steps (as measured by LLM-as-judge win rates and benchmark evaluations).

\iffalse

In this paper, we show that GRPO policy updates can be improved by removing sensitivity to the noisy, poorly calibrated magnitude of external reward model scores. We improve on the GRPO style update by keeping only the rank order of the rewards. We propose Group Ordinal Policy Optimization (GOPO), an algorithm that optimizes an objective function that implicitly forces the model to focus only on the rank of the responses. 

Reward models output real numbers for non-verifiable tasks and are usually trained using the Bradley--Terry loss, which distills the pairwise order information of two responses of a single prompt. As a result, whether one reward is larger than the other is often more accurate than the absolute distance between the rewards. GOPO therefore discards the possibly noisy and bias-prone interval-scale information, injecting only rank information into the RL update, thereby yielding more stable learning and faster improvement. Across training reward trajectories and policy quality at varying training steps (measured by LLM-as-judge win rates and benchmark evaluations), GOPO provides consistently stronger guidance than GRPO. 
\fi

We summarize our contributions as follows. Across a suite of non-verifiable tasks and base model sizes:
\begin{enumerate}
    \item We show that GOPO-updated policies consistently achieve higher training reward trajectories than those trained with GRPO. 
    \item We show that GOPO attains superior test performance, measured by benchmark scores and/or win rates judged by frontier large, general-purpose language models. 
    \item We show that GOPO is more sample-efficient, i.e., it reaches comparable output quality earlier in training.
    % \item Through extensive experiments, we show that rank is all we need for a policy update and that all other reward information are nuisances. 
\end{enumerate}

% \begin{figure}
%     \centering
%     \includegraphics[width=0.8\linewidth]{figure/seed42_lines_image.pdf}
%     \caption{Caption}
%     \label{fig:placeholder}
% \end{figure}

\begin{figure*}[!ht]
    \centering
    \begin{tabular}{ccc}
        \includegraphics[width=0.32\linewidth]{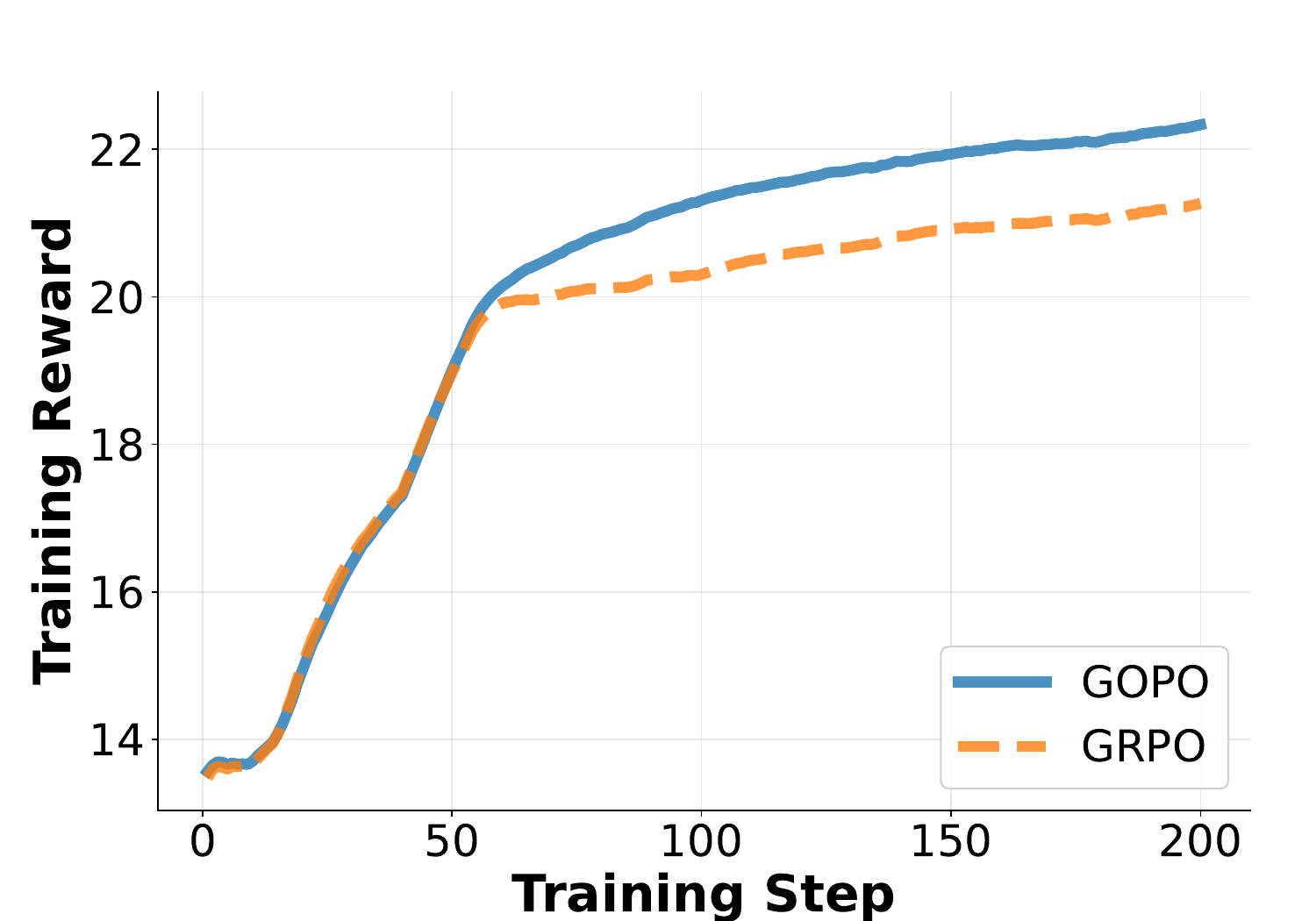} &
        \includegraphics[width=0.32\linewidth]{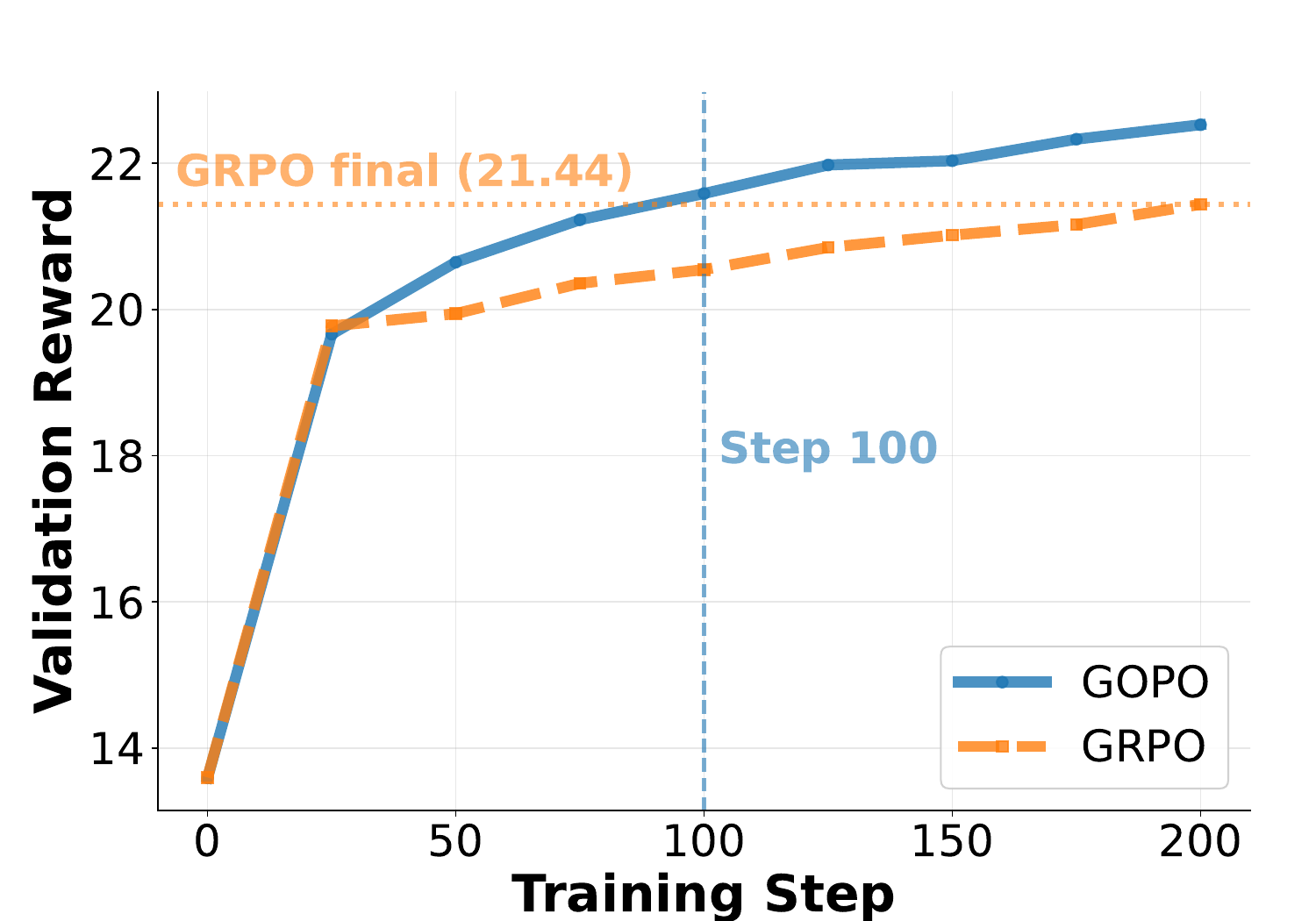} &
        \includegraphics[width=0.32\linewidth]{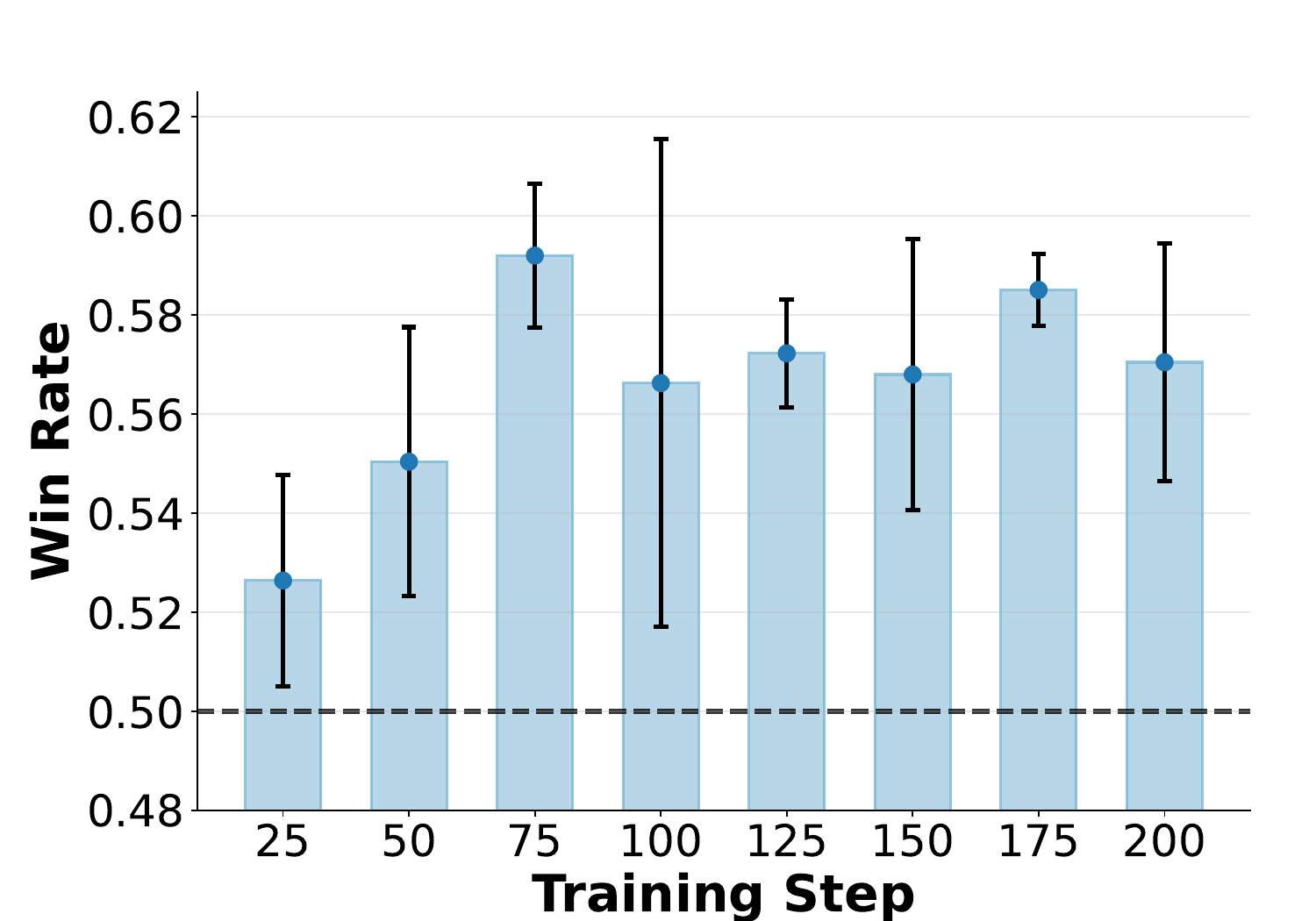} \\
        (a) Training Reward &
        (b) Validation Reward &
        (c) LLM-as-judge Evaluation
    \end{tabular}
    \caption{\textbf{Base model: \texttt{Qwen3-8B}, Reward model: Skywork (\texttt{Qwen3-8B}), Task: \texttt{TLDR}.} 
    % \rrd{this is a good one to push to main paper.} 
    % \rrd{the last point on the 3rd plot here might confuse folks - so what if we stop at 200 steps. + y-axis needs to be limited}
    Figures (a) and (b) plot the per-training step policy's generation mean reward using prompts in the training dataset and validation dataset respectively---both rewards are consistently higher for GOPO updated policies throughout training. Figure (c) reports the LLM-as-judge win-rate (see Section \ref{sec:eval} on how the win-rate is defined) of GOPO updated policies against GRPO updated policies at matched training steps---for \emph{multi-seed generations}, GOPO consistently improves the win-rates throughout all training steps. The policy generation temperature for Figure (c) is fixed at $0.5$; see Table \ref{tab:qwen8b_combined} in Section \ref{sec:tldrchat} for win-rates on varying temperatures. Lastly, validation reward of GRPO at its last training step is achieved earlier for GOPO (step $100$), and the GOPO win-rate at its earlier training step against the final GRPO is $0.52$.
    }
    \label{fig:tldr-8b}
\end{figure*}

\iffalse
\begin{figure*}[!ht]
    \centering
    \begin{tabular}{cc}
        \includegraphics[width=0.45\linewidth]{figure/tr_ultrachat_qwen4b_seed42.pdf} &
        \includegraphics[width=0.45\linewidth]{figure/chat_multi_model_best_worst_survey.pdf}\\
        (a) Training Reward Trajectories & (b) Test Evaluation (LLM-as-Judge)
    \end{tabular}
    \caption{\textbf{Train and test evaluations (\texttt{UltraChat}})}
    \label{fig:chat-train-test}
\end{figure*}
\fi

\section{Related Work}

This work lies at the intersection of reinforcement learning from human feedback, preference-based policy optimization, and recent efforts to improve the stability and efficiency of post-training for large language models. We focus in particular on how reward signals are shaped during policy optimization, and how this choice affects policy optimization for training language models on non-verifiable tasks.

\paragraph{RLHF and Policy Optimization for Language Models}

Modern LLM alignment pipelines typically consist of a combination of SFT and RL using a learned reward model trained from human preference data \citep{ouyang2022training, stiennon2020learning, ziegler2019finetuning}. The dominant optimization method at this stage has been PPO, often augmented with a KL penalty to prevent excessive drift from the reference SFT model. 

To reduce the variance and engineering complexity of token-level value estimation, several works replace explicit critic learning with {group-based} baselines. GRPO \citep{shao2024deepseekmath} is a prominent example in which multiple completions per prompt are sampled and their rewards are standardized within the group to form advantages, and it has been especially successful in {verifiable} domains (e.g., math and reasoning) where rewards are often binary and relatively well-calibrated.

However, in many alignment tasks—summarization, instruction following, and open-ended dialogue—rewards are non-verifiable and come from external reward models. In such settings, reward magnitudes are known to be noisy, poorly calibrated, and sensitive to distribution shift. Our work builds directly on GRPO but questions whether its reliance on cardinal reward information (via $z$-scoring) is appropriate in this regime.

\paragraph{Preference Learning and Ordinal Information}

Reward models in RLHF are typically trained using pairwise preference data under a Bradley--Terry or similar logistic ranking formulation \citep{bradley1952rank, christiano2017deep}. Such models are fundamentally optimized to capture {relative orderings} between responses rather than absolute reward scales. As a result, while the sign of reward differences may be reliable (i.e., which response is better), the magnitude of those differences is often biased.

This mismatch between {ordinal supervision} and {cardinal policy optimization} has been recognized in prior works related to preference-based reinforcement learning (PbRL)~\citep{busa2014preference, jain2013learning, sadigh2017active}. Most PbRL approaches first reconstruct a latent reward function and then optimize it, thereby re-introducing scale sensitivity. In contrast, our method directly incorporates ordinal structure into the policy gradient itself, bypassing the need to trust reward magnitudes.

Related ideas also appear in contextual dueling bandits \citep{yue2012dueling, dudik2015contextual}, where policies are evaluated through pairwise comparisons rather than absolute payoffs. While those works focus on online preference queries and regret minimization, the underlying principle—that {rank information can be sufficient for policy improvement}—closely aligns with our approach.

\paragraph{Variance Reduction, Robustness, and Advantage Design}

The design of the advantage function is central to policy-gradient stability. PPO and GRPO both rely on normalization (e.g., $z$-scoring) to control gradient scale. However, standardized rewards can still be sensitive to outliers and reward model miscalibration, especially as the number of sampled completions per prompt grows.

Several recent works have explored alternative ways to control variance and training dynamics in LLM post-training. For example, curriculum or variance-aware sampling strategies prioritize prompts with informative reward variation \citep{jiang2025vcrl}, while budgeted or knapsack-style RL methods allocate exploration resources adaptively across prompts \citep{li2025knapsack}. These methods aim to improve {which data points} are emphasized during training. In contrast, GOPO modifies {how reward information is encoded} within each prompt group: rather than filtering prompts with small variance, we amplify even small but reliable ordinal differences by mapping them to fixed, evenly spaced ranks.

% Our approach is also related to broader themes in robust reinforcement learning, where reducing sensitivity to reward scale and heavy-tailed noise can improve stability. By construction, rank-based advantages are uniformly bounded and invariant to monotone transformations of rewards, providing a form of robustness that standardized advantages lack.

\paragraph{Multi-Stage Post-Training}

Large-scale LLM alignment pipelines increasingly involve multiple stages of RL, often starting with verifiable tasks and then moving to preference-based, non-verifiable objectives. For example, GRPO has been applied after earlier stages such as reinforcement learning with verifiable rewards (RLVR) or reasoning-focused training \citep{shao2024deepseekmath}. While these works demonstrate that GRPO can be reused in later stages, they largely retain the same $z$-score advantage design without re-examining its suitability when rewards come from learned preference models. GOPO can be viewed as a replacement for GRPO in second-stage RL, specifically tailored to the statistical properties of reward models trained from pairwise preferences. 

\begin{figure*}[!ht]
    \centering
    \begin{tabular}{ccc}
        \includegraphics[width=0.32\linewidth]{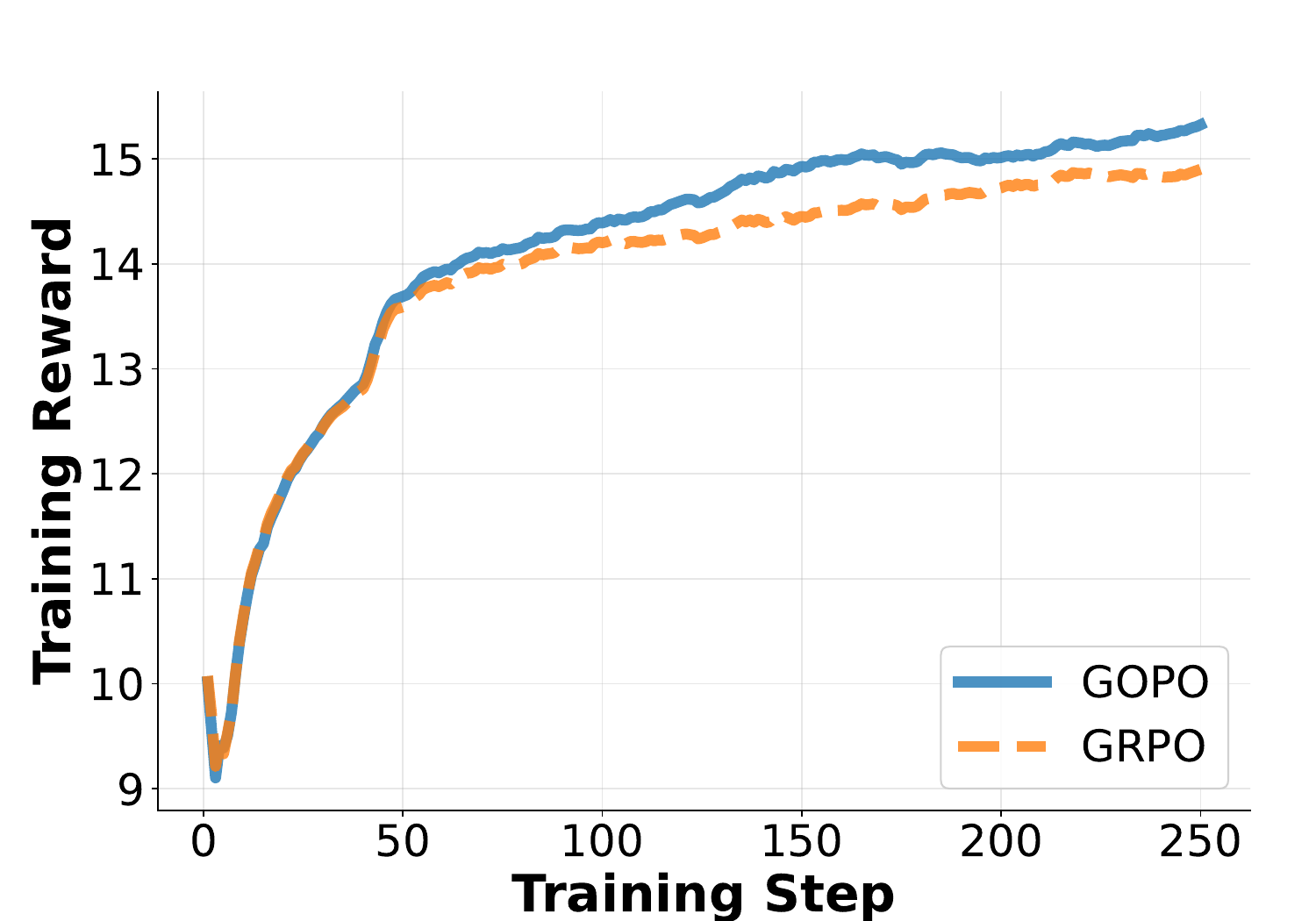} &
        \includegraphics[width=0.32\linewidth]{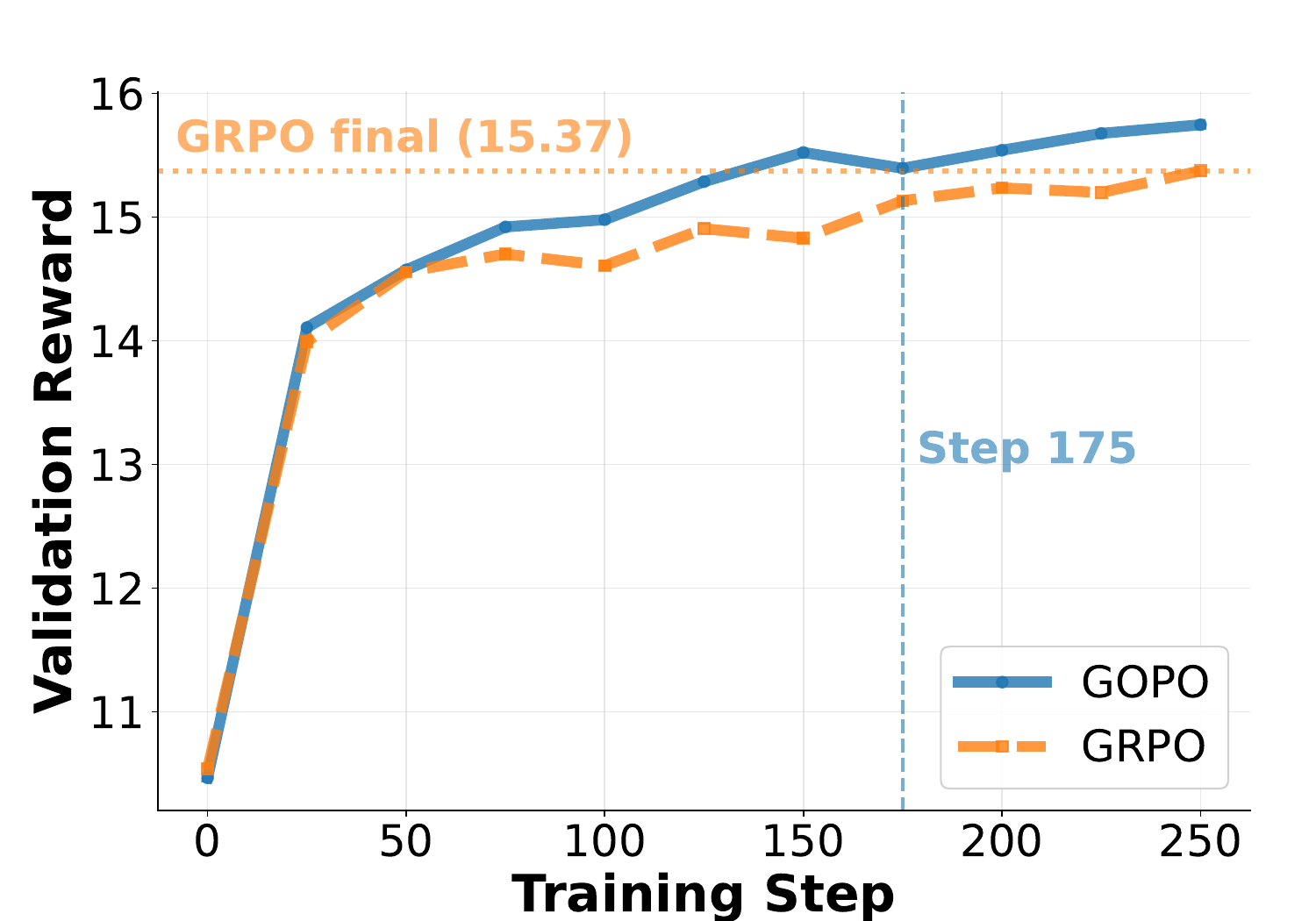} &
        \includegraphics[width=0.32\linewidth]{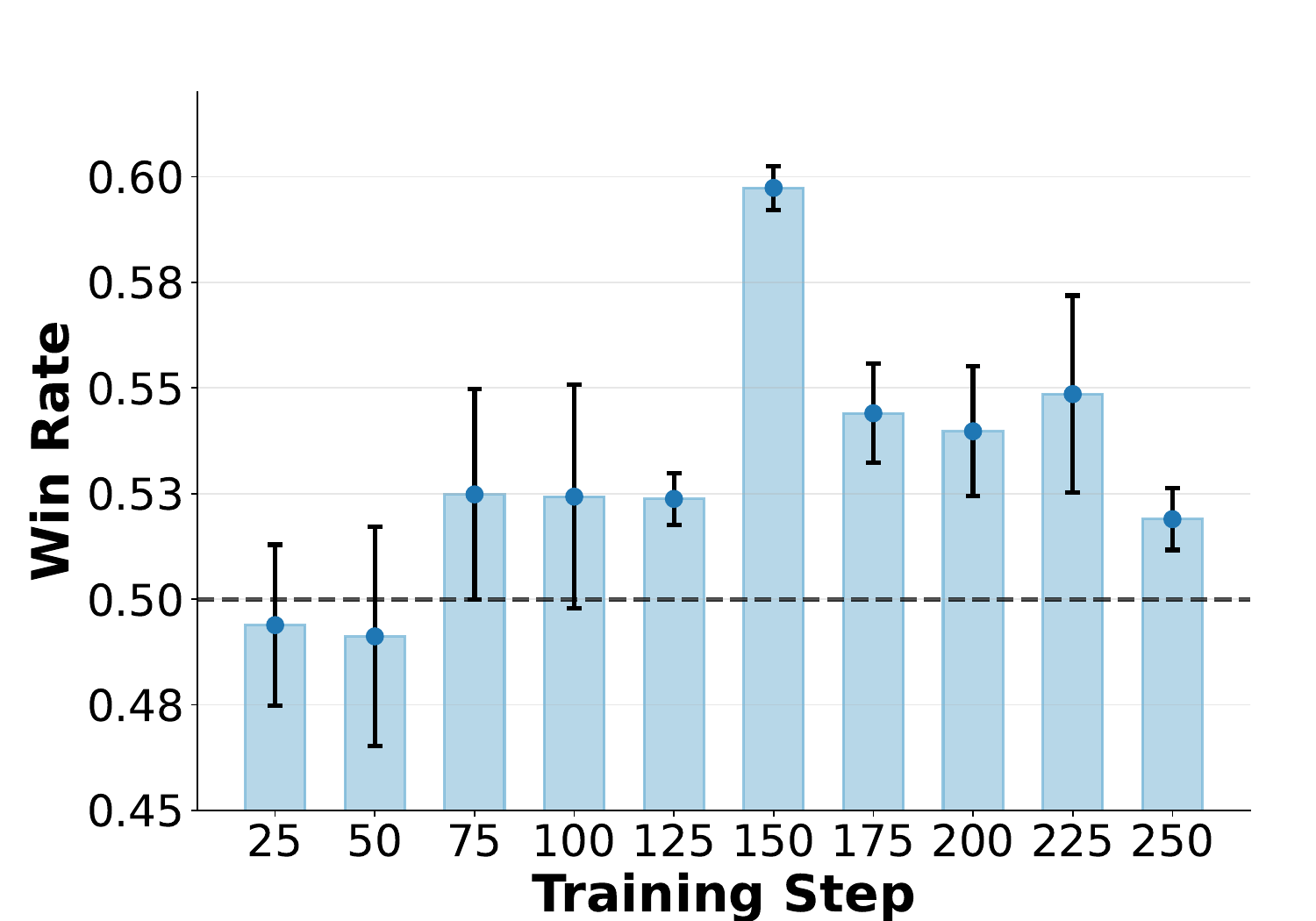} \\
        (a) Training Reward &
        (b) Validation Reward &
        (c) LLM-as-judge Evaluation
    \end{tabular}
    \caption{\textbf{Base model: \texttt{Qwen3-4B}, Reward model: Skywork (\texttt{Qwen3-8B}), Task: \texttt{UltraChat}.} Figure (a) and (b) are the per-training step policy's generation mean reward using prompts in the training dataset and validation dataset respectively---both rewards are consistently higher for GOPO updated policies throughout training. Figure (c) contains the LLM-as-judge win-rate (see section \ref{sec:eval} on how the win-rate is defined) of GOPO updated policies against GRPO updated policies at their identical training steps---for \emph{multi-seed generations}, GOPO consistently improves the win-rates throughout most of the training steps. The policy generation temperature for Figure (c) is fixed at $0.5$; see Table \ref{tab:qwen_small_combined} in Appendix \ref{app:robust} for results on varying temperatures. Lastly, validation reward of GRPO at its last training step is achieved earlier for GOPO (step $175$), and the GOPO win-rate at its earlier training step against the final GRPO is $0.52$.}
    \label{fig:chat-4b}
\end{figure*}

\section{Method}

In what follows, we formally introduce GOPO in the context of reinforcement learning with ranking information in the non-verifiable reward setting. Before doing so, we provide a brief overview of how GRPO operates. We then conclude with intuition for why, in this setting, rank-based advantages are preferable to $z$-scores, along with theoretical insights into the robustness of ranking-based training dynamics.

\subsection{Review of GRPO}

\newcommand{\objfunc}{\mc J(\theta)}

In this section, we review GRPO~\cite{shao2024deepseekmath}. Denote $\pi_\theta$ as a generic parametrized policy, which here is the language model itself. The baseline policy model, denoted $\pizero$, is typically a policy refined through SFT. Policy optimization is an iterative online optimization process: the previous policy $\piold$ generates outputs that are used to construct an objective function $\objfunc$, and $\pinew:=\argmin_\theta\objfunc$. In the next training round, $\pinew$ is set as $\piold$. Specifically, for GRPO, the following sequence of recipes is needed to construct the objective function from the previous policy:
% \textcolor{red}{talk about how current estimate as well is used to find a next estimate using the objective, i.e., we should be providing the acutal algorithm....}. 
% The objective function of GRPO is constructed with the following sequence of recipes:
\begin{enumerate}
    \item Prompts are sampled $q \sim P_Q$ from some distribution $P_Q$ over the set of prompts in the training data.\footnote{Notice that we omit any subscript indexing with regard to different prompts in the training dataset for notational convenience.  }
    \item For each prompt $q$, completions are generated $o_i = [o_{i, 1}, ..., o_{i, T}]$ (total of $T$ tokens) for $i = 1, ..., G$ from the old policy $\piold$.
    \item A reward model $r_\phi$ assigns reward $r_{i}$ for each prompt and completion pair $(q, o_i)$.\footnote{Originally, GRPO was designed for math, logic tasks that entailed verifiable rewards, meaning that $r_i \in \{0, 1\}$.}
\end{enumerate}

To formalize the GRPO objective $\mc J(\theta)$, let $\pi_\theta$ denote the current policy and
$\pi_{\mathrm{old}}$ the reference (behavior) policy. For output $i$ at step $t$, define the
likelihood ratio
\[
\pi_t(\theta) := \frac{\pi_\theta\left(o_{i,t}\mid q, o_{1:t-1}\right)}{\pi_{\mathrm{old}}\left(o_{i,t}\mid q, o_{1:t-1}\right)},
\]
where $o_{1:s}=[o_1,\ldots,o_s]$ denotes the token prefix. Next define the clipped variant
\[
f(\hat A_{i,t},\pi_t(\theta)) :=
\min\Big\{\pi_t(\theta)\hat A_{i,t},\mathrm{clip}\big(\pi_t(\theta),1\pm\varepsilon\big)\hat A_{i,t}\Big\}.
\]
With these definitions, the GRPO objective is given by $\mc J(\theta)$ (see \cite{shao2024deepseekmath}) which is

\begin{equation}\label{eq:obj-func}
    \begin{aligned}
        \mc J(\theta) &= \E_{q \sim P_Q}\bigg[ \frac{1}{G}\sum_{i = 1}^{G} \frac{1}{T}\sum_{t = 1}^{T} f(\hat A_{i, t}, \pi_t(\theta)) \bigg]\\
        &- \beta \mathrm{KL}( \pi_\theta \| \pizero ).
    \end{aligned}
\end{equation}
The advantage $\hat A_{i, t}$ reflects the relative importance of a response $o_i$ across the completions for a given prompt. For a fixed prompt $q$, the advantages of GRPO \cite{shao2024deepseekmath} are set as the standardized rewards \textit{broadcasted across the token index $t$}; i.e., for all $i = 1, ..., G, t = 1, ..., T$, $\hat A_{i, t} = \hat A^{\mrm{std}}_{i, t}$ where
\begin{equation}\label{eq:std-advantage}
\begin{aligned}
    \hat A^{\mrm{std}}_{i, t} := \frac{r_i - \mrm{mean}(r_1, ..., r_G)}{\mrm{std}(r_1, ..., r_G)}.
\end{aligned}
\end{equation}
% So a single training step updates the old policy to a new one $\pinew = \pi_{\theta^\star}$where $\theta^\star = \argmax_{\theta}\mc J(\theta)$, and the next training step iterates this by resetting $\piold = \pinew$.

% \textcolor{red}{give the update, talk about how we terminate etc or maybe this can be generally written after the separate advantage computation for each method}

\subsection{Group Ordinal Policy Optimization (GOPO)}

We innovate the GRPO algorithm by redefining the advantage $\hat A_{i, t}$ within the objective function $\objfunc$ in (\ref{eq:obj-func}).
% The objective $\mc J_{\mrm{GRPO}}$ is designed to guide policies so that it outputs completions that are favored by the reward model.
For post-training a language model on non-verifiable tasks, we claim that disregarding everything except the per-prompt order of rewards improves the original GRPO update \cite{shao2024deepseekmath}.

The advantage defined in (\ref{eq:std-advantage}) encodes the \textit{cardinal} information of rewards, and we discard them to propose 
% and is therefore more nuanced than the \textit{ordinal} information of rewards. Recall that the interval between two rewards is expressed as $r_i - r_j = \mrm{sign}(r_i - r_j)\cdot|r_i - r_j|$ where the quantity $\mrm{sign}(r_i - r_j)$ is positive if and only if $r_i > r_j$. 
propose a rank-transform advantage $\hat A^{\mrm{rank}}_{i,t}$, which is assigned to the completions $o_1, ..., o_G$ as follows:
\begin{figure*}[!ht]
    \centering
    \begin{tabular}{ccc}
        \includegraphics[width=0.32\linewidth]{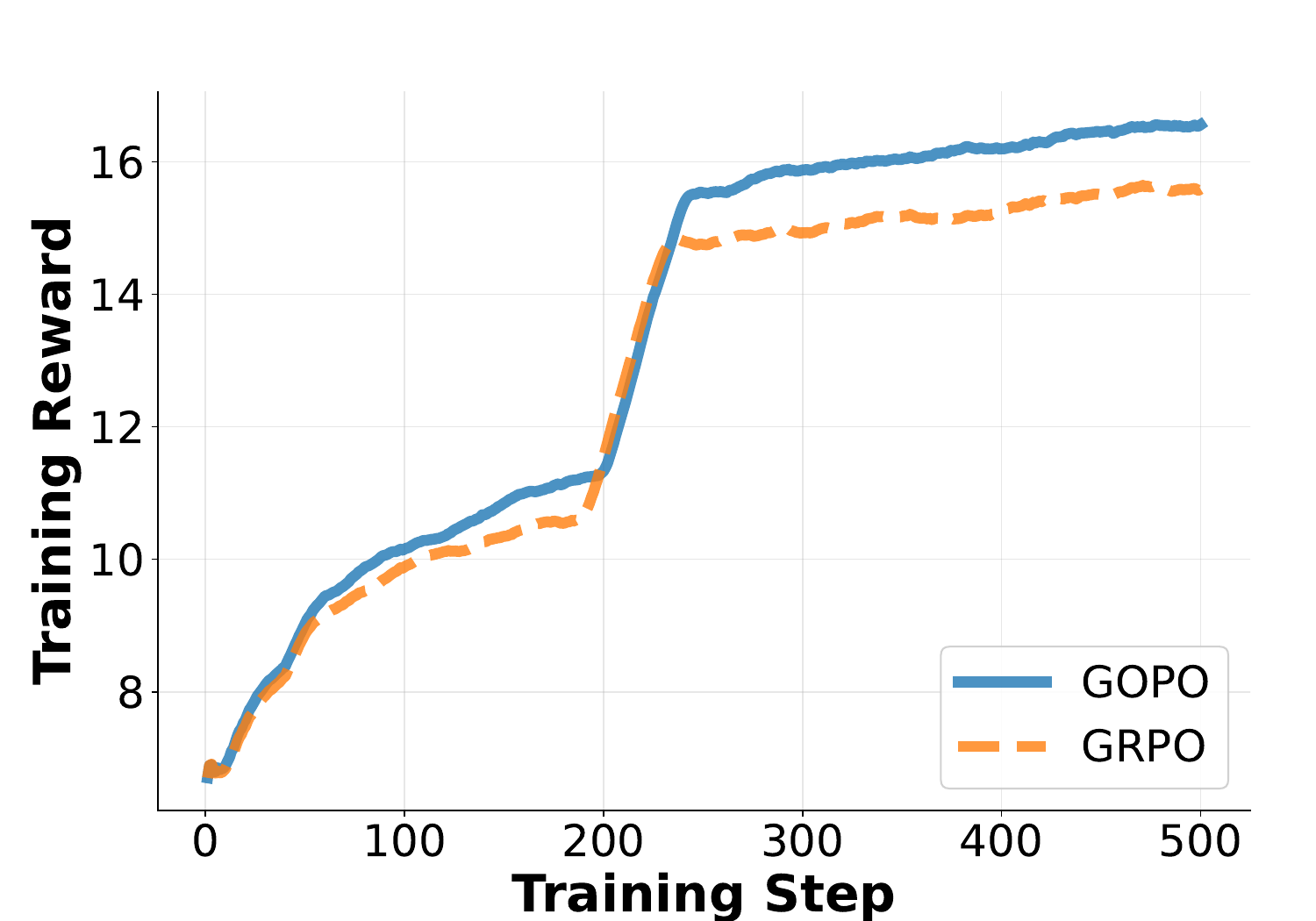} &
        \includegraphics[width=0.32\linewidth]{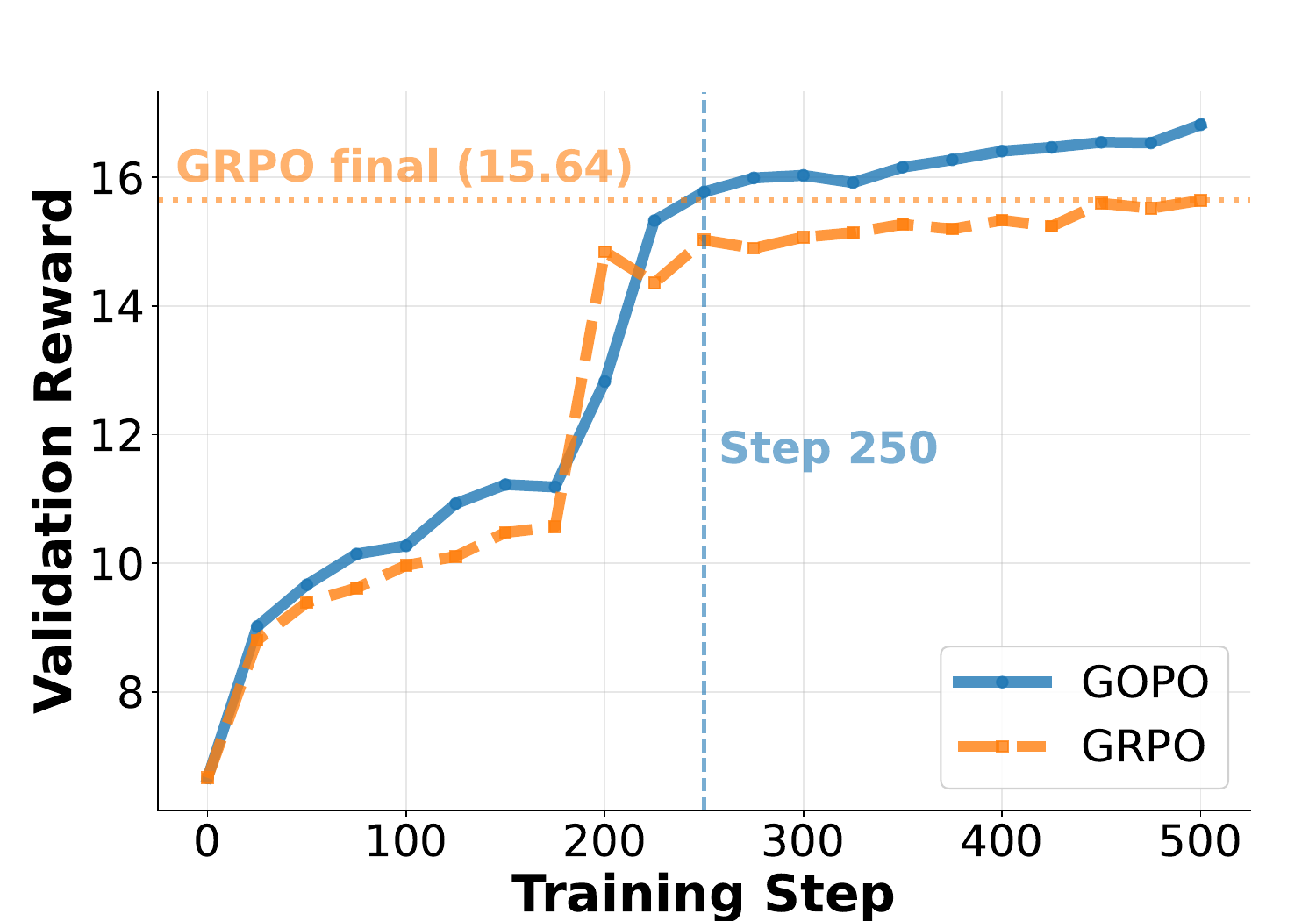} &
        \includegraphics[width=0.32\linewidth]{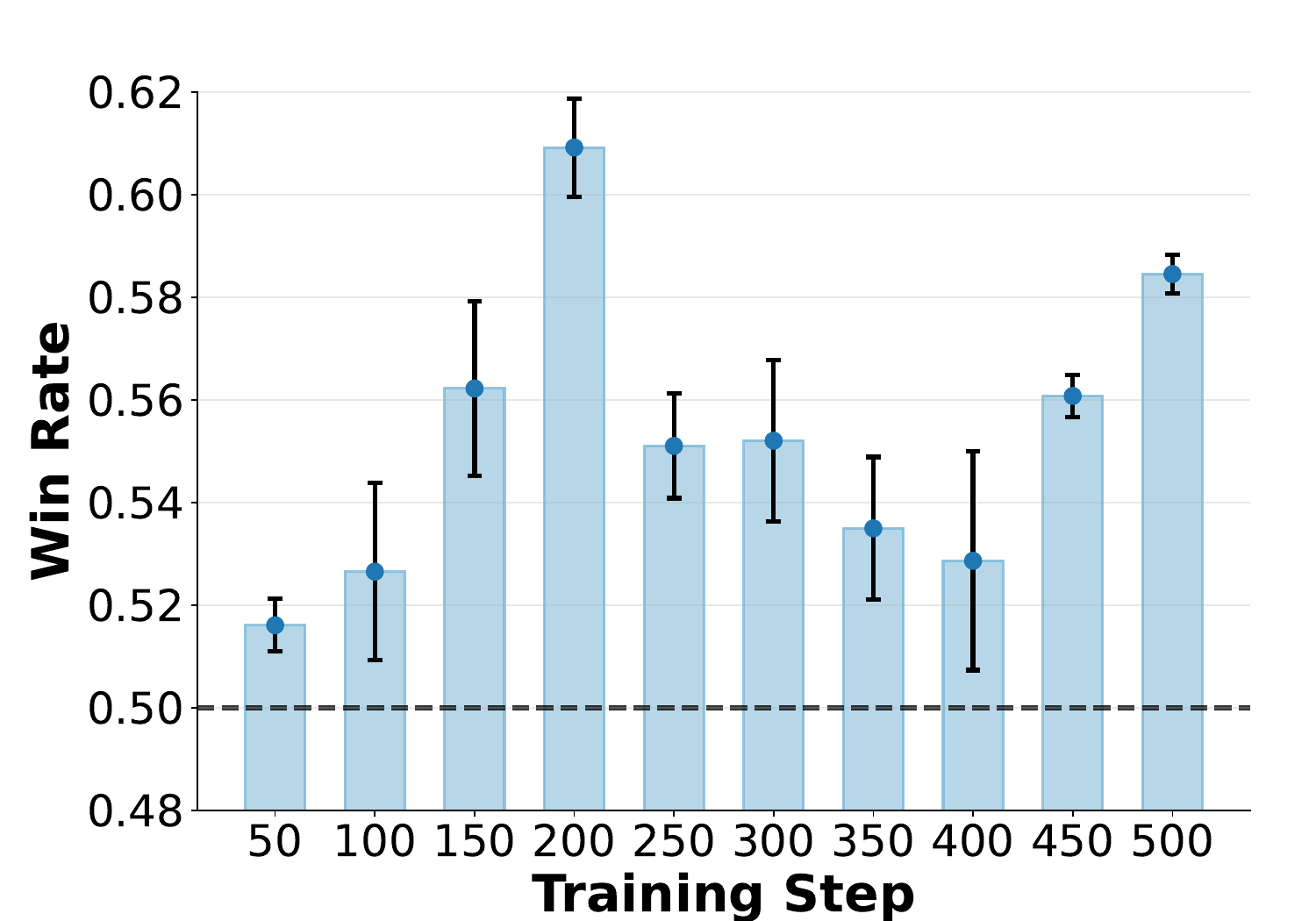} \\
        (a) Training Reward &
        (b) Validation Reward &
        (c) LLM-as-judge Evaluation
    \end{tabular}
    \caption{\textbf{Base model: \texttt{Qwen3-1.7B}, Reward model: Skywork (\texttt{Qwen3-8B}), Task: \texttt{TLDR}.} 
    % \rrd{you can smoothen the first plot for all such training plots; for the middle plot just put a label on top of the horizontal dashed line of GRPO final and another lable for the numnber of steps on the vertical line; make the two dashed line thicker and may be blue horizontal and red vertical with some alpha <1; for the right plot start the y-axis at 0.45 and end at 0.65} 
    Figure (a) and (b) are the per-training step policy's generation mean reward using prompts in the training dataset and validation dataset respectively. Both rewards are consistently higher for GOPO updated policies throughout training. Figure (c) contains the LLM-as-judge win-rate (see Section \ref{sec:eval} for how win-rate is defined) of GOPO updated policies against GRPO updated policies at their identical training steps---for \emph{multi-seed generations}, GOPO consistently improves the win-rates throughout all training steps. The policy generation temperature for Figure (c) is fixed at $0.5$; see Table \ref{tab:qwen_small_combined} in Appendix \ref{app:robust} for results on varying temperatures. Lastly, validation reward of GRPO at its last training step is achieved earlier for GOPO (step $250$), and the GOPO win-rate at its earlier training step against the final GRPO is $0.52$.
    % ; refer to Figure \ref{fig:tldr-kl} for the win-rates using other sampling temperatures.
    }
    \label{fig:tldr-1.7b}
\end{figure*}
\begin{enumerate}
    \item Completion $o_i$ with \textit{highest} reward $r_i$ is assigned advantage $\hat A_{i, t}^{\mrm{rank}} = 2$ for all tokens. 
    \item The completion $o_j$ with \textit{lowest} reward $r_j$ is assigned advantage $\hat A_{j, t}^{\mrm{rank}} = -2$ for all tokens. 
    \item All the remaining completions' advantages are placed between $(-2, 2)$ in an equidistant manner according to their reward ranks. 
\end{enumerate}

A more compact definition of rank-transform advantage $\hat A_{i, t}^{\mrm{rank}}$ is as follows:
\begin{equation}\label{eq:rank-advantage}
    \begin{aligned}
        \hat A_{i, t}^{\mrm{rank}} &:= 2 - \{\rho(i) - 1\}\cdot\frac{4}{G - 1}\\
        &\text{for all $i = 1, ..., G$   \ and \ $t = 1, ..., T$,}
    \end{aligned}
\end{equation}
where $\rho: [G] \to [G]$ is the rank mapping of each $i$th completion $o_i$ according to its reward $r_i = r_\phi(q, o_i)$, i.e., $i$th completion with highest rank receives rank $1$ and the lowest receives rank $G$. The upper and lower bounds of $\hat A_{i, t}^{\mrm{rank}}$ align with $2$ standard deviations of the standardized variables and the advantages are \textit{equi-distant} by the amount $\frac{4}{G - 1}$.

\subsection{Why rank?}\label{sec:why-rank}

% \textcolor{blue}{TODO: Chitchat on the intuition - DS done.}

Reward models are trained on preference data \cite{bradley1952rank}, so they are particularly effective at determining when one completion is better than another, rather than assessing the absolute quality of a response. The latter is widely recognized as a much harder problem—one that reward models are not well suited to solving.

However, current policy optimization methods typically rely on absolute rewards. In the context of group-based policy optimization for non-verifiable rewards, we identified an opportunity to instead leverage the information that reward models excel at providing relative comparisons. This yields a natural ordering of completions. We believe that aligning reward models and policy models in this way leads to improved performance and more robust training procedures, characterized by faster convergence and greater stability.

% \textcolor{blue}{KC: Also the rank of true rewards are preserved if the approximation bias of parametrized reward models is small. This explains the utility of GOPO when using reward models that are not necessarily trained on preference data. Let's find some literature for this.}

\paragraph{Gradient norms}

We examine the behavior of the gradient norm of policy updates when the per-prompt sample size $G$ is small. We argue that for small $G$ (e.g., $G<10$), the norm of the gradients under the GOPO update generally has higher variance. This is related to the fact that the uniform distribution over rank-transformed advantages attains maximal entropy. 

We simplify the setting by considering a single prompt and a smoothed version (i.e., disregard non-differentiable components~ $\min$ and $\mrm{clip}$) of the objective function (\ref{eq:obj-func}). Let $q \sim P_Q$ be a randomly sampled prompt, then the smooth objective is 
\begin{equation*}
    \begin{aligned}
        \mc J(\theta) &= \frac{1}{G} \sum_{i = 1}^{G} \frac{1}{T}\sum_{t = 1}^{T} \pi_t(\theta) \hat A_{i, t} - \beta  \mrm{KL}(\pi_\theta \| \pi_{\mrm{ref}})\\
        &=: \mc J_{1}(\theta) - \beta  \mrm{KL}(\pi_\theta \| \pi_{\mrm{ref}})
    \end{aligned}
\end{equation*}
where $\mc J_{1}(\theta)$ refers to the double sum in the above display. Define the random vector $X_i :=T^{-1}\sum_{t = 1}^{T} \nabla_\theta \log \pi_t(\theta) \pi_t(\theta)$ (the index $i$ is implicit in $\pi_t(\theta)$, see (\ref{eq:obj-func})). Note that we drop the index $b$ for the advantage $\hat A_{i, t}$ and the vector $X_i$ as we are in the single prompt setting. As advantages are broadcasted throughout tokens (set $\hat A_{i, t} = \hat A_i$), we observe
\begin{equation*}
    \begin{aligned}
        \nabla_\theta \mc J (\theta) &= \nabla_\theta \mc J_1(\theta) - \beta \nabla_\theta \mrm{KL}(\pi_\theta\|\pi_{\mrm{ref}})\\
        &= \frac{1}{G}\sum_{i = 1}^{G}\hat A_{i} X_i - \beta \nabla_\theta \mrm{KL}(\pi_\theta \| \pi_{\mrm{ref}}).
    \end{aligned}
\end{equation*}
% We consider the gradient norm of the first sum of the above display. Define the random vector $X_i^{(b)} :=T^{-1}\sum_{t = 1}^{T} \nabla \log \pi_t(\theta) \pi_t(\theta)$; note that the index $i$ and $b$ are implicit in $\pi_t(\theta)$, see (\ref{eq:obj-func}).  

% For each completion $i \in \{1,\dots,n\}$ and token $t \in \{1,\dots,T\}$,
% let $o_{i,t}$ be the $t$-th token in the $i$-th completion.
% Define its score gradient
% \[
% g_{i,t} := \nabla_\theta \log \pi_\theta(o_{i,t}\mid o_{i,<t}) \in \mathbb{R}^d.
% \]
% The per-completion gradient is
% \[
% X_i := \sum_{t=1}^T g_{i,t},
% \]
% and the weighted GRPO gradient estimator is
% \[
% \widehat g(w) := \frac{1}{n}\sum_{i=1}^n w_i X_i,
% \]
% with weights $w_i$ being either rank-based $\hat A_{i, t}^{\mrm{rank}} \in [-2,2]$ or standardized rewards $\hat A_{i,t}^{\mrm{std}}$.

\begin{theorem}[Larger Gradient Norms]\label{thm:B1-gopo-inflation}
Let $\mc F = \sigma(q, A_1, \dots, A_G)$ be the conditioning event and define the centered vectors $\xi := g - \E[g\mid \mc F]$ and $\widetilde{X}_i := X_i - \E[X_i\mid \mc F]$ where $g = \nabla_\theta \mc J_1(\theta)$ is the gradient of the (non-penalized) objective function.
% denote \[g:=G^{-1}\sum_{i=1}^G A_i X_i \qquad \xi := g - \E[g\mid \mc F].\] 
Assume $\widetilde{X}_i$ are conditionally uncorrelated and have second moment $\sigma^2_X < \infty$ almost surely.\footnote{Formally, $\E[\langle \widetilde X_i,\widetilde X_j\rangle\mid\mc F]=0$ for $i\neq j$ and
$\E[\|\widetilde X_i\|^2\mid\mc F]=\sigma_X^2$ for all $i \in [G]$.} Then
\[
\E\|\xi\|^2=\frac{1}{G}\,\E\!\left[\sigma_X^2\cdot \frac{1}{G}\sum_{i=1}^G \hat A_i^2\right].
\]
\end{theorem}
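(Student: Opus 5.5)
The argument is a direct second-moment computation, done first conditionally on $\mc F$ and then averaged with the tower property. Since $g = \nabla_\theta \mc J_1(\theta) = G^{-1}\sum_{i=1}^G \hat A_i X_i$ and the advantages $\hat A_1,\dots,\hat A_G$ are $\mc F$-measurable (they are functions of $A_1,\dots,A_G$, which generate $\mc F$), we can pull them out of conditional expectations:
\[
\E[g\mid\mc F] = \frac{1}{G}\sum_{i=1}^G \hat A_i\,\E[X_i\mid\mc F].
\]
Subtracting this from $g$ gives the linear representation
\[
\xi = \frac{1}{G}\sum_{i=1}^G \hat A_i \widetilde X_i .
\]

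Next we expand the squared norm:
\[
\|\xi\|^2 = \frac{1}{G^2}\sum_{i,j}\hat A_i\hat A_j\langle \widetilde X_i,\widetilde X_j\rangle .
\]
Taking conditional expectation given $\mc F$, the $\mc F$-measurable weights $\hat A_i\hat A_j$ again factor out. The off-diagonal terms vanish because $\E[\langle\widetilde X_i,\widetilde X_j\rangle\mid\mc F]=0$ for $i\neq j$, as stated in the footnote's uncorrelatedness assumption. Each diagonal term contributes $\hat A_i^2\sigma_X^2$. Hence
\[
\E[\|\xi\|^2\mid\mc F] = \frac{\sigma_X^2}{G^2}\sum_{i=1}^G \hat A_i^2 = \frac{1}{G}\,\sigma_X^2\cdot\frac{1}{G}\sum_{i=1}^G \hat A_i^2 .
\]
Taking the outer expectation via the tower property yields the claimed identity. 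We allow $\sigma_X^2$ to be $\mc F$-measurable, which is why it stays inside the expectation in the statement.

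The only real care point is justifying that the manipulations are legitimate. First, we need the measurability of $\hat A_i$ with respect to $\mc F$; this holds because $\hat A_i$ is either the rank transform or the $z$-score of the rewards, and $\mc F$ is interpreted as containing the reward/advantage information. Second, we need integrability so that conditional expectations of the cross terms exist and the tower property applies. For the rank advantages this is immediate, since $|\hat A_i|\le 2$. For the standardized advantages it also holds, since $\frac{1}{G}\sum_i(\hat A^{\mrm{std}}_i)^2 = 1$ deterministically (with the population standard deviation). Combined with $\sigma_X^2<\infty$ a.s., the Cauchy--Schwarz inequality bounds $|\langle\widetilde X_i,\widetilde X_j\rangle|$ in conditional $L^1$. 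I would note, without needing it for the proof, that this normalization is what makes the comparison meaningful: GRPO gives exactly $\E\|\xi\|^2=\E[\sigma_X^2]/G$. GOPO gives $\frac{1}{G}\sum_i(\hat A^{\mrm{rank}}_i)^2 = \frac{4(G+1)}{3(G-1)}$, which exceeds $1$ for every $G\ge2$ and so yields a larger conditional noise norm.
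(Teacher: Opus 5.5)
Your proposal is correct and takes essentially the same route as the paper: pull the $\mc F$-measurable advantages out of the conditional expectation to get $\xi = G^{-1}\sum_i \hat A_i \widetilde X_i$, expand $\|\xi\|^2$, drop the cross terms by conditional uncorrelatedness, and finish with the tower property. Your remarks on measurability and integrability are extra care that the paper omits, but they do not change the argument.
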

Theorem \ref{thm:B1-gopo-inflation} implies that the variance of the advantages drives the (conditionally centered) gradient norm of the update. Defining $\xi_{\text{GOPO}}$ and $\xi_{\text{GRPO}}$ as the variants of $\xi$ with their specific advantage definitions respectively, we observe
\[
\frac{\E\|\xi_{\text{GOPO}}\|^2}{\E\|\xi_{\text{GRPO}}\|^2} = \frac{4(G+1)}{3(G-1)} > 1.
\]
Note that the gradient norm inflation is particularly big for smaller per-prompt sample size $G$.

\begin{remark}[Connection to KL, Theorem \ref{thm:B1-gopo-inflation}]\label{rem:kl} Policy optimization using either GOPO or GRPO takes the form $\theta_1 \leftarrow \theta_0 + \eta g$, where $g$ is the gradient of the objective function. Such an update of the policy parameter $\theta$ yields the approximation $\mrm{KL}(\pi_{\theta_{\mrm{new}}}\| \pi_{\theta_{\mrm{old}}}) \approx \frac{\eta^2}{2}g^T F(\theta_{\mrm{old}})g$ for the Fisher information matrix $F(\cdot)$. This information encodes the sensitivity of $\mrm{KL}$ to the change of $\theta$; see \cite{schulman2015trust} for details. Given that the step-wise policy updates are local (by design of imposing KL penalization), the Fisher information varies smoothly, allowing us to argue $g^T g$ is proportional to $g^T F(\theta_{\mrm{old}}) g$ up to the Fisher metric. Therefore, larger gradient norms of GOPO naturally yield steep KL increase, see Figure \ref{fig:kl} in Appendix \ref{app:kl}.
\end{remark}

% \textcolor{blue}{@KS we should mention why the fisher information is prop to $E\xi^2$ right?}

In Appendix \ref{app:thm-large-G}, we show that gradient norm of GOPO updates are asymptotically bounded as $G$ grows, implying robustness of GOPO updates for large per-prompt sample size $G$ setting, whereas that of GRPO tends to grow with $G$.

\section{Experimental Setup}

\begin{figure*}[!ht]
    \centering
    \begin{tabular}{ccc}
        \includegraphics[width=0.32\linewidth]{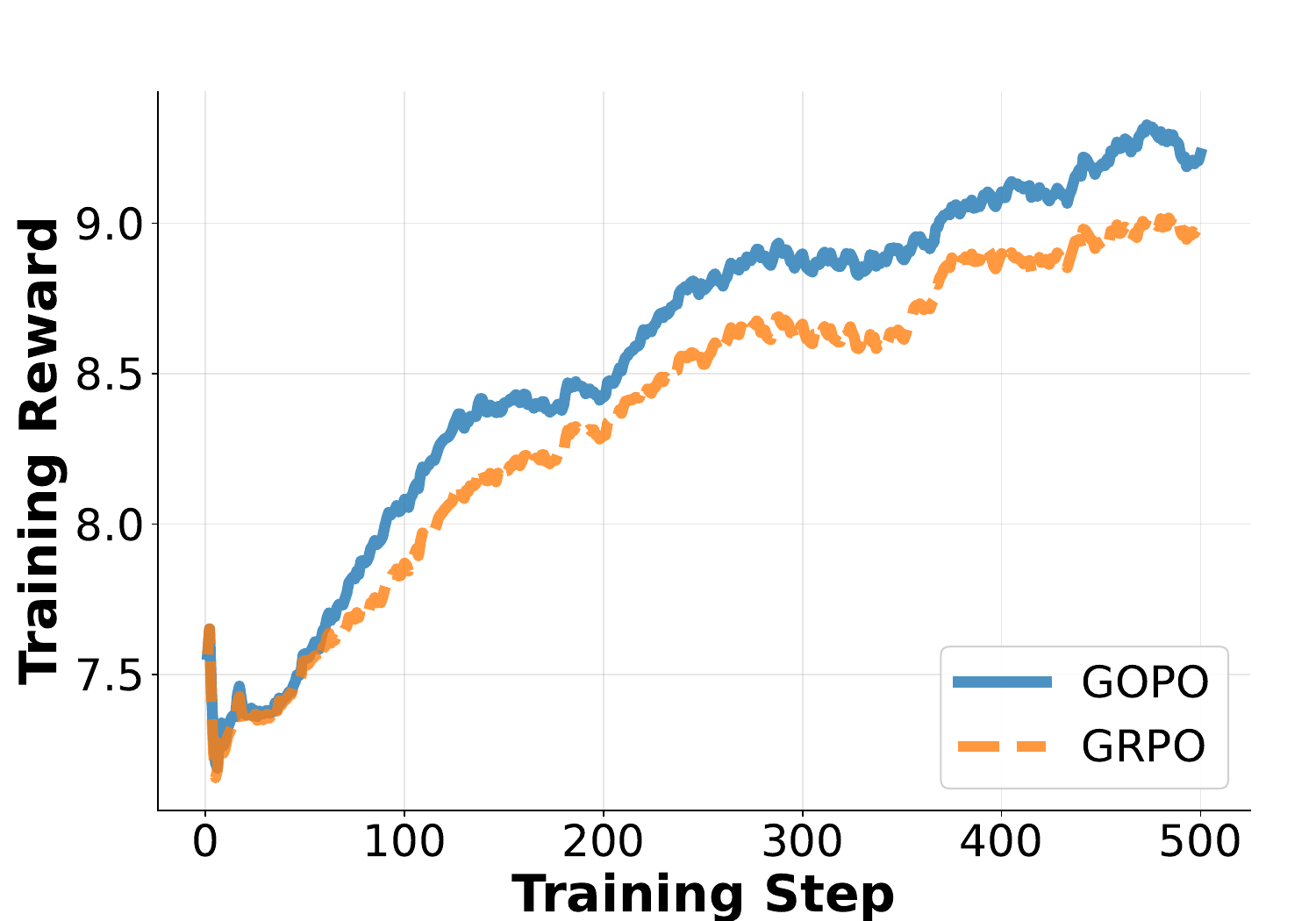} &
        \includegraphics[width=0.32\linewidth]{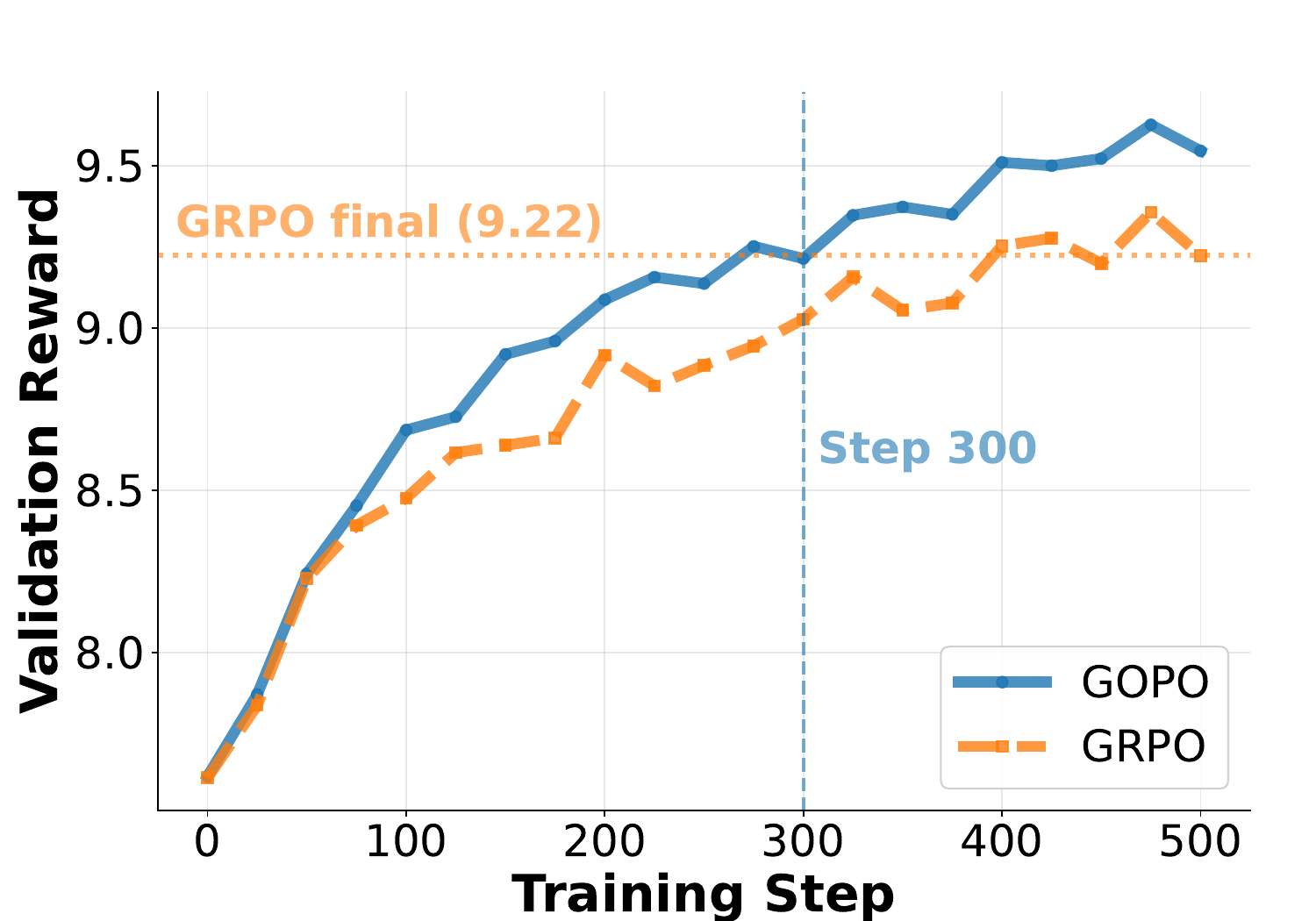} &
        \includegraphics[width=0.32\linewidth]{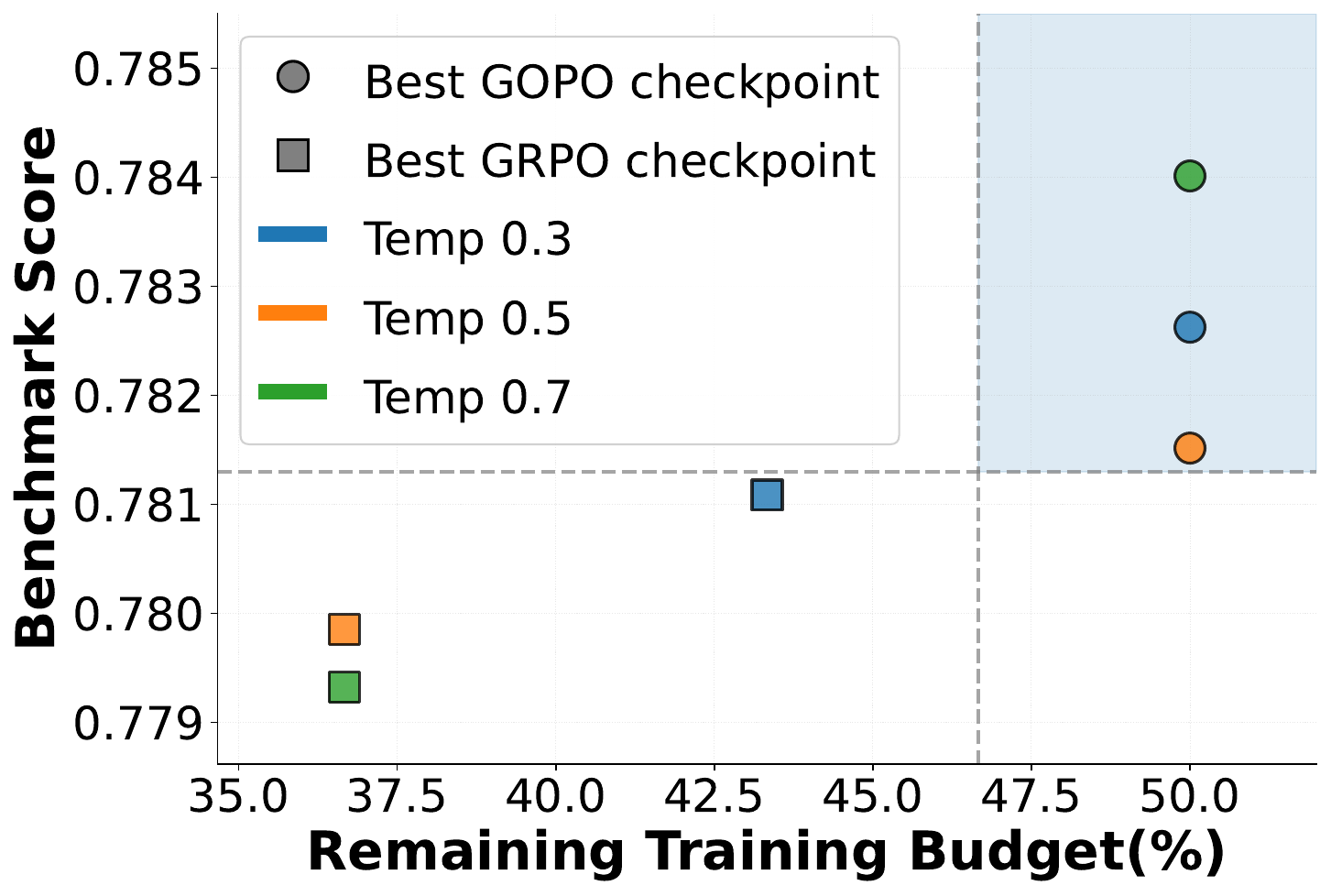} \\
        (a) Training Reward &
        (b) Validation Reward &
        (c) Benchmark score
    \end{tabular}
    \caption{\textbf{Base model: \texttt{Qwen3-1.7B}, Reward model: Skywork (\texttt{Qwen3-8B}), Task: \texttt{IFEval}.} Figure (a) and (b) plot the per-training step policy's generation mean reward using prompts in the training dataset and validation dataset respectively---both rewards are consistently higher for GOPO-updated policies throughout training. Figure (c) contains the best benchmark score (see Section \ref{sec:eval} for details) of GOPO-updated policies and GRPO-updated policies across multiple generation temperatures---GOPO achieves higher scores at earlier checkpoints. 
    % Lastly, validation reward of GRPO at its last training step is achieved earlier for GOPO (step $300$), and the GOPO win-rate at its earlier training step against the final GRPO is $0.44$. \textcolor{blue}{(KC: llm-as-judge is not so favorable for ifeval---how to present?)}
    }
    \label{fig:ifeval-1.7b}
\end{figure*}

% \rrd{first plot needs to be cleaner -- unless the figure 3 with ultrachat gets more impactful, I will recommend you put that one in the appendix} 

% \begin{figure*}[h]
%     \centering
%     \begin{tabular}{cc}
%         \includegraphics[width=0.45\linewidth]{figure/Qwen3-1.7B-if-bsz128-ts750_best_scores_by_temp_strict.pdf} &
%         \includegraphics[width=0.45\linewidth]{figure/Qwen3-1.7B-if-bsz128-ts750_best_scores_by_temp_loose.pdf}\\
%         (a) Strict score  & (b) Loose score
%     \end{tabular}
%     \caption{\textbf{Benchmark evaluation (\texttt{IFEval}).}}
%     \label{fig:ifeval-bench}
% \end{figure*}

% \textcolor{blue}{@TODO: Try different post-training data and different baseline policy model. Scale up prompt batch size and completion size.}

\newcommand{\chat}{\texttt{UltraChat}}
\newcommand{\tldr}{\texttt{TLDR}}

% \textcolor{purple}{@WJ: Using the words and the numbers, can you reformat the experiment section that is more readable for the LLM community?}

We systematically compare the text generation quality of language models post-trained by either GOPO or GRPO, for a suite of tasks and model scales. 

% This section first elaborates on the model training setting and evaluation metrics, then discusses the experiment results. For a suite of baseline models, tasks and evaluation metrics, GOPO achieves improved performance.

% 
% An out-of-sample reward model based win-rate is introduced as our main evaluation metric; the metric is elaborated in \ref{sec:eval} and it provides a fair comparison on the two different update strategies applied on a baseline policy. 

\subsection{Training}
\paragraph{Models}

We use three model sizes: \texttt{Qwen3-1.7B}, \texttt{Qwen3-4B}, and \texttt{Qwen3-8B}. We employ instruction-tuned variants to ensure strong baseline performance and stable training. Each baseline model is updated using either GRPO or GOPO, with the number of training steps varying by model size. At each step, we sample 128 prompts ($B$) and sample 8 completions ($G$) per prompt. Let $\pi_{\mathrm{gopo}}^\star(k)$ and $\pi_{\mathrm{grpo}}^\star(k)$ denote the $k$th checkpoint, that is, the policy after $k$ training steps, obtained via GOPO and GRPO respectively. We use a learning rate of $1 \times 10^{-6}$ with 10 warmup steps. The maximum generation length is capped at 2048 tokens.

\paragraph{KL-adjusted training steps}

We adopt a model-agnostic KL budget to calibrate training across different baseline model sizes. Larger baseline models naturally require fewer training steps, as they are already more capable.

KL divergence serves not only as an explicit budget during post-training, but also as a drift budget that can be consumed fairly across policies. We therefore apply a principled early-stopping criterion for larger baseline models as follows. 

For each task of interest, the KL of $500$ training steps using \texttt{Qwen3-1.7B} is set as $\mrm{KL}^\star$, and we stop training at $k^\star = \min_{k} \big\{k: \mrm{KL}(\pi_{\mrm{gopo}}^\star(k) \| \pi_{\mrm{ref}}) \geq \mrm{KL}^\star \big\}$. In practice, larger models (\texttt{Qwen3-4B} and \texttt{Qwen3-8B}) reach the same KL divergence as \texttt{Qwen3-1.7B} in roughly half the number of training steps.

% Let policy $\pi_{\mrm{std}}$ be the final optimizer for $\mc J_{\mrm{GRPO}}$ using the standardized advantages (\ref{eq:std-advantage}). Our proposed policy $\rankpolicy$ is the final optimizer for $\mc J_{\mrm{GOPO}}$ using the rank based advantages $\hat A_{i, t} = \hat A_{i, t}^{\mrm{rank}}$.

\paragraph{Tasks and reward models}

We train and evaluate on three tasks: summarization, chat completion, and instruction following. For summarization and chat completion, we use TL;DR~\cite{trl_lib_tldr_dataset} and UltraChat~\cite{ding2023enhancing} for training, and evaluate on held-out in-distribution test sets. For instruction following, we use Tulu-3~\cite{lambert2024tulu} for training and IFEval~\cite{zhou2023instruction} for evaluation. For convenience, we refer to these datasets as \texttt{TLDR}, \texttt{UltraChat}, and \texttt{IFEval}.

We employ two open-source external reward models that rank highly on RewardBench~\cite{lambert2025rewardbench}, which evaluates models across multiple criteria, including question answering, instruction following, and fact checking. \texttt{UltraChat} uses QRM (\texttt{LLama-8B})~\cite{dorka2024quantile} or Skywork (\texttt{Qwen3-8B})~\cite{liu2025skywork}, while \texttt{TLDR} and \texttt{IFEval} use Skywork (\texttt{Qwen3-8B}).

% We first elaborate on the design and result of our experiments which compares the performance of the two policies updated by GOPO and GRPO. The experiment is extends on three different axes: size of baseline policy models, types of tasks/reward model pairs and evaluation metrics (i.e., different judges for LLM-as-judge). 

% \textcolor{blue}{TODO: Explain how we do evaluation --- majority vote, prompt input, various judges, distribution of majority wins, ... UPDATE: ... since we explain this, how will we talk about ties, what is best, need to ask raaz i think}    

% \textcolor{purple}{TODO: talk about reward trajectories, gradient norms, etc}

% \textcolor{magenta}{TODO: Add the specifics of the experiment here, not in the following sub-sections.}

\subsection{Evaluation}\label{sec:eval}

\iffalse
\begin{itemize}
    \item \texttt{n-gram} based matching metric \cite{papineni2002bleu,lin2004rouge} -- \textcolor{blue}{Nvidia guy said that these metrics are gaining interest again, especially in the \textit{translation} tasks}
    \item Regression based metric --- this requires human labeling, but as @DS mentioned, human evaluation need be done like a survey, meaning that there should several metrics (correctness, conciseness, ...) that need be instructed when API calling
    \item 
    \item LLM-as-judge: unsupervised and easy to scale --- also the prompt need be carefully designed as done in regression based metric above. Further we can apply CLT uncertainty quantification (shared by @DS in this \href{https://www.anthropic.com/research/statistical-approach-to-model-evals}{article}): Given $V$ test prompts, $q_1, ..., q_V$, consider bootstrapping subsamples $q_1^{(b)}, ..., q_n^{(b)}$ for $b = 1, ..., B$. Then define $\mbf{win}_{\mrm{llm}}^{(b)} = n^{-1}\sum_{j= 1}^{n}\mbf 1 \{ \text{model chooses $o_{j, b}^{\mrm{gopo}}$} \}$, where $o_{j, b}^{\mrm{gopo}}, o_{j, b}^{\mrm{grpo}}$ are the completions. Then CLT implies $\mbf{win}_{\mrm{llm}}^{(b)}$, $b = 1, ..., B$ would follow a Gaussian distribution --- after appropriate normalization, let's try to apply hypothesis testing. 
\end{itemize}
\fi

Evaluating open-ended text generation is inherently challenging. We therefore conduct both pointwise and pairwise evaluations for GRPO and GOPO using multiple metrics, aiming to provide high-fidelity judgments under our evaluation constraints.

\iffalse
\paragraph{Out-of-sample reward model win-rate}

The rewards $r^{\mrm{gopo}}_v = r_\phi(q_v, o^{\mrm{gopo}}_v)$ and $r^{\mrm{grpo}}_v = r_\phi(q_v, o^{\mrm{grpo}}_v)$ are calculated. For the validation prompt, we define a win of GOPO if $r^{\mrm{gopo}}_v > r^{\mrm{grpo}}_v$. The reward model based win-rate of GOPO is formally defined as 
\begin{equation}\label{eq:gopo-win}
    \begin{aligned}
        \mbf{win}_{\mrm{reward}} = \frac{1}{V}\sum_{v = 1}^{V} \mbf 1 \{ r_v^{\mrm{gopo}} > r_v^{\mrm{grpo}} \}.
    \end{aligned}
\end{equation}
The reward model based win-rate $\mbf{win}_{\mrm{reward}}$ is a fair comparison metric between the two policies $\pigopo$ and $\pigrpo$. Asides from how the advantage $\hat A_{i, t}$ was defined, everything else in the training are identically configured including the reward model $r_\phi$. Further, the triples $(q, o_{\mrm{gopo}}, o_{\mrm{grpo}})$ are independently sampled from the training data set used to attain $\pigopo, \pigrpo$. 
\fi

\paragraph{Training and validation reward}

At a fixed training step, a batch of $B$ training prompts $q$ is sampled, and both GOPO and GRPO generate $G$ completions $o_i$ per prompt from the current policy. An external reward model $r_\phi$ assigns a reward to each prompt–completion pair, $r_i = r_\phi(q, o_i)$. The training reward is computed as the average, across $B$ prompts, of the per-prompt mean reward $G^{-1} \sum_{i=1}^{G} r_\phi(q, o_i)$. The validation reward is defined identically, except that the $B$ prompts are sampled from a held-out validation set.

\paragraph{LLM-as-judge win-rate}

For pairwise evaluation, we use an LLM-as-judge to determine the winner between GRPO and GOPO. We randomly sample prompts $q_v$, for $v = 1, \ldots, V$. Each prompt $q_v$, together with its system prompt if applicable, is passed to two trained policies, $\pi_{\mathrm{gopo}}(k)$ and $\pi_{\mathrm{grpo}}(k')$, producing responses $o^{\mathrm{gopo}}_v(k)$ and $o^{\mathrm{grpo}}_v(k')$, respectively. 
% The win rate is estimated via bootstrapping over the test prompts.
% \textcolor{blue}{@KS should we explain bootstrapping more in detail?} \textcolor{blue}{@WJ Probably not, since we are not constructing confidence intervals---we might have to change the last sentence to simple averaging.}

We use \texttt{gpt5} from OpenAI as the judge. For each comparison, we provide the triple $(q_v, o^{\mathrm{gopo}}_v(k), o^{\mathrm{grpo}}_v(k'))$ along with a rubric, and the judge selects a winner between the two responses. Note that the checkpoints $k$ and $k'$ need not be identical.
To mitigate positional bias, where the judge may prefer the first presented response, we randomize the presentation order of $o^{\mathrm{gopo}}_v(k)$ and $o^{\mathrm{grpo}}_v(k')$ for each judgment~\cite{wang2024large}.

For more fine-grained evaluation, the rubric consists of five criteria: helpfulness, correctness, coherence, complexity, and verbosity. The full rubric is provided in Appendix \ref{app:judge}. The judge selects a winner for each criterion, and the final winner is determined by majority vote across the five criteria. The win-rate is defined as the proportion of prompts for which GOPO is selected as the overall winner.

\paragraph{Benchmark score}

The \texttt{IFEval} task is partially verifiable by design, with instructions that include explicit constraints such as prohibiting commas or enforcing length limits. The benchmark consists of 571 test prompts that cover 25 different instruction types. Although this setup does not necessarily assess the semantic quality of the generated responses, it enables pointwise and more easily quantifiable evaluation through a deterministic verifier.

\section{Results}

We demonstrate the superiority of GOPO over GRPO using three evaluation methods.
\begin{enumerate}
\item Throughout training, we track both training and validation rewards for policies updated via GOPO and GRPO.
\item We report LLM-as-judge~\cite{gu2024survey} win rates (over multiple generation seeds) of GOPO-updated policies over GRPO-updated policies across multiple intermediate checkpoints. See Figures \ref{fig:tldr-8b}, \ref{fig:chat-4b} and \ref{fig:tldr-1.7b} for details. Additional experiment results are in the Appendix \ref{app:eval-extra}.
\item When available, we report pointwise benchmark scores~\cite{zhou2023instruction}. See Figure \ref{fig:ifeval-1.7b} for details.
\end{enumerate}
Overall, we show that our method is robust at improving performance across model sizes, tasks, evaluation metrics, and multiple sampling temperatures. See Table \ref{tab:qwen8b_combined} and Table \ref{tab:qwen_small_combined} in Appendix \ref{app:robust} for additional robust properties of GOPO across sampling temperatures. Furthermore, we provide text examples where GOPO generates higher-quality outputs than GRPO across a range of training steps; see Appendix~\ref{app:examples} for details.

% \textcolor{blue}{@KS should probably say our methods are robust across different sampling tempratures, model sizes, tasks, etc rather than just tempratures....}

\subsection{Experiment result: \texttt{TLDR} and \texttt{UltraChat} LLM-as-judge}\label{sec:tldrchat}

For the two non-verifiable tasks \texttt{TLDR} and \texttt{UltraChat}, across multiple model sizes (\texttt{Qwen3-1.7B, 4B, 8B}), (i) we compare the training / validation reward trajectories of GOPO and GRPO updates, and (ii) track the LLM-as-judge GOPO win-rates. GOPO improves on all the evaluation metrics and does so across multiple sampling temperatures.

\paragraph{LLM-as-judge win-rates}

We track the GOPO win-rates against GRPO at identical training steps, i.e., for some fixed training step $k$, we ask \texttt{gpt-5} the winner among the two generations $o_v^{\mrm{gopo}}(k)$ and $o_v^{\mrm{grpo}}(k)$; see Section \ref{sec:eval} for details.

Under a fixed sampling temperature $\tau = 0.5$, Figures \ref{fig:tldr-8b}-(c), \ref{fig:chat-4b}-(c) and \ref{fig:tldr-1.7b}-(c) imply that the win-rates of GOPO updated policies are consistently above $0.5$ throughout training, with statistical significance by constructing confidence intervals using multiple seeds. At its best, GOPO achieves a win-rate near $0.6$. Thus, regardless of when the training is terminated, the end policy updated by GOPO is guaranteed to either tie or win against that of GRPO. Additional experiment results can be found in Appendix \ref{app:eval-extra}.

% This observation is consistent across all sampling temperatures, hence exhibiting robust performance of GOPO; see Appendix \ref{app:tldr-extra} for more results. 

\paragraph{Training / Validation reward and efficiency}

The training and validation reward trajectories for GOPO updates are consistently above those of GRPO updates as seen in Figures \ref{fig:tldr-8b}-(a), (b), \ref{fig:chat-4b}-(a), (b) and \ref{fig:tldr-1.7b}-(a), (b); see Section \ref{sec:eval} for details on how training / validation rewards are defined.

Validation rewards are often used as a proxy for comparing policies, so we compare the policy trained by GOPO where its validation reward matches the best validation reward of GRPO updated policies---in Figures \ref{fig:tldr-8b}-(b), \ref{fig:chat-4b}-(b) and \ref{fig:tldr-1.7b}-(b), the blue horizontal dotted line refers to GRPO's best validation reward and the red vertical dashed line refers to the early-stopping training step of GOPO updates. We observe that GOPO achieves GRPO's highest validation reward at approximately half of the training steps, while achieving a comparable LLM-as-judge win-rate (slightly above $0.5$). In other words, GOPO is able to reach a policy distribution comparable (in validation reward and llm-as-judge win-rate) to that of GRPO $2\times$ faster.

% \textcolor{blue}{Message: Given the same (model agnostic) KL or Trust Region budget, (1) GOPO achieves it faster with fewer training steps with comparable win-rates and (2) in some cases, performance is better. Giving the same KL for different update schemes, you are giving equal allowance to edit the baseline model's distribution---how fast can you edit to a distribution of comparable capacity?}

% We compare the completions $o_v^{\mrm{gopo}}(k)$ and $o_v^{\mrm{grpo}}(k')$ when $\pigopo(k)$ and $\pigrpo(k')$ achieve the same KL divergence from the reference policy. Normally, the gradient norm for GOPO is larger than that of GRPO update, hence implying a faster divergence of GOPO updated policies from the reference policy. The typical divergence of policies are measure by KL, which serves also as a regularization measure during policy updates. In general, GOPO updated policies achieve the same KL budget at half the training steps compared to GRPO updated policies. 

% \begin{figure}[t]
%     \centering
%     \includegraphics[width=1.0\linewidth]{figure/tldr_scatter_survey.pdf}
%     \caption{Scatter plot of (best win-rate, checkpoint) and (worst win-rate, checkpoint) for all temperatures.}
%     \label{fig:tldr-best-worst-scatter}
% \end{figure}

% We compare GOPO updated and GRPO updated policies of same KL divergence, specifically $\pi^{\mrm{gopo}}(250)$ and $\pi^{\mrm{grpo}}(500)$, where $500$ is the max number of training steps we train policies. In Figure \ref{fig:tldr-kl}, we observe that the win-rates of $\pi^{\mrm{gopo}}(250)$ are slightly above $0.5$, implying that the 

\paragraph{Robustness}
 
The LLM-as-judge win-rates of GOPO in Figures \ref{fig:tldr-8b}, \ref{fig:chat-4b} and \ref{fig:tldr-1.7b} are established under multi-seed generation, but at a fixed sampling temperature of $0.5$. Across multiple sampling temperatures, we show GOPO win-rate is consistently higher for the majority of training steps. Table \ref{tab:qwen8b_combined} presents the GOPO win-rates (under sampling temperatures $0.1, 0.5, 0.9$) on tasks \texttt{TLDR, Ultrachat} for a base model \texttt{Qwen3-8B}. The win-rates are consistently above $0.5$ for most training steps. See Table \ref{tab:qwen_small_combined} in Appendix \ref{app:robust} for additional results on different model sizes. 

\begin{table}[t]
\centering
\setlength{\tabcolsep}{3pt} % Reduce column spacing (default is 6pt)
\caption{\textbf{Base model: \texttt{Qwen3-8B}.} LLM-as-judge GOPO win-rates against GRPO at identical training step. Left block: Reward model Skywork (\texttt{Qwen3-8B}) on \texttt{TLDR}. Right block: Reward model Skywork (\texttt{Qwen3-8B}) on \texttt{UltraChat}. Rows show training progress; columns show sampling temperature $\tau$ of the policy trained up to the specific steps.}
\small % Optional: use smaller font
\begin{tabular}{rccc|ccc}
\toprule
& \multicolumn{3}{c}{\textbf{Task: \texttt{TLDR}}} 
& \multicolumn{3}{c}{\textbf{Task: \texttt{UltraChat}}} \\
\cmidrule(lr){2-4}\cmidrule(lr){5-7}
\textbf{Steps} 
& \textbf{$\tau$\ =\ 0.1} & \textbf{$\tau$\ =\ 0.5} & \textbf{$\tau$\ =\ 0.9}
& \textbf{$\tau$\ =\ 0.1} & \textbf{$\tau$\ =\ 0.5} & \textbf{$\tau$\ =\ 0.9} \\
\midrule
25  & 0.58 & 0.52 & 0.55 & 0.53 & 0.54 & 0.49 \\
50  & 0.56 & 0.58 & 0.57 & 0.50 & 0.46 & 0.48 \\
75  & 0.60 & 0.57 & 0.60 & 0.59 & 0.56 & 0.55 \\
100 & 0.61 & 0.58 & 0.58 & 0.52 & 0.53 & 0.50 \\
125 & 0.58 & 0.59 & 0.52 & 0.54 & 0.54 & 0.56 \\
150 & 0.57 & 0.57 & 0.60 & 0.52 & 0.47 & 0.47 \\
175 & 0.58 & 0.59 & 0.55 & 0.58 & 0.55 & 0.59 \\
200 & 0.58 & 0.54 & 0.56 & 0.59 & 0.56 & 0.56 \\
\bottomrule
\end{tabular}
\label{tab:qwen8b_combined}
\end{table}

% \begin{table}[htbp]
% \centering
% \begin{tabular}{rccc}
% \toprule
% \textbf{Steps} & \textbf{$\tau$\ =\ 0.1} & \textbf{$\tau$\ =\ 0.5} & \textbf{$\tau$\ =\ 0.9} \\
% \midrule
% 25 & 57.9 & 52.0 & 54.6 \\
% 50 & 56.0 & 58.4 & 56.8 \\
% 75 & 59.7 & 57.4 & 59.7 \\
% 100 & 61.0 & 57.8 & 58.2 \\
% 125 & 58.1 & 58.6 & 52.1 \\
% 150 & 57.0 & 57.0 & 59.5 \\
% 175 & 57.6 & 58.9 & 55.0 \\
% 200 & 57.9 & 54.2 & 56.1 \\
% \bottomrule
% \end{tabular}
% \caption{\textbf{Base model: \texttt{Qwen3-8B}, Reward model: Skywork (\texttt{Qwen3-8B}), Task: \texttt{TLDR}.} LLM-as-judge GOPO win-rates (\%), rows show training progress, columns show sampling temperature. GOPO win-rates are above $0.5$ (near $0.6$ at its best) across all training progress and sampling temperature $\tau$.}
% \label{tab:qwen8b_tldr}
% \end{table}

% \begin{figure}[t]
%     \centering
%     \includegraphics[width=1.0\linewidth]{figure/tldr_qwen4B_seed42.pdf}
%     \caption{}
%     \label{fig:tldr-kl}
% \end{figure}

\subsection{Experiment result: \texttt{IFEval} benchmark}\label{sec:ifeval}

For the task \texttt{IFEval}, (i) we compare the training / validation reward trajectories of GOPO and GRPO updates, and (ii) track the benchmark score for GOPO and GRPO updated policies \cite{zhou2023instruction}. GOPO improves on all evaluation metrics, across multiple sampling temperatures.

\paragraph{Benchmark scoring and robustness}

There are two types of scores that are provided by the benchmark~\cite{zhou2023instruction}, a strict score and a loose score. Given $N$ test prompts $q_j$ of \texttt{IFeval}, a completion $o_j$ from some policy is assigned score $1$ if all the instructions in the prompt $q_j$ are satisfied for $o_j$, then these binary values are averaged across all $N$ prompts to define the benchmark score. Two types of scores are further explained in the Appendix.

A policy positioned in the upper-right quadrant in Figure \ref{fig:ifeval-1.7b}-(c) is considered most desirable as it implies a faster convergence of the base model to a capable model for instruction-following task. The best version (out of multiple training steps) of GOPO updated policies stays within the upper-right quadrant uniformly across all sampling temperatures. 

\paragraph{Training / Validation reward}

The training and validation reward trajectories for GOPO updates are consistently above that of GRPO updates as seen in Figures \ref{fig:ifeval-1.7b}-(a), (b); see Section \ref{sec:eval} for details on how training / validation rewards are defined.

\section{Conclusion}
Reward models in RLHF are fitted to pairwise preferences, so their most reliable signal is ordinal, which response is better, while many policy optimization methods still assume the reward scale is meaningful. We introduced Group Ordinal Policy Optimization, a replacement for GRPO that removes this mismatch by discarding reward magnitudes and using only within-prompt reward ranks. Essentially, the policy update depends on the same comparisons that the reward model is trained to make.

Empirically, GOPO delivers consistently stronger and more sample-efficient post-training than GRPO across tasks, model sizes, reward models, and sampling temperatures. It improves training/validation reward trajectories, yields favorable LLM-as-judge win rates, and achieves quicker gains on the partially verifiable IFEval benchmark (Figures \ref{fig:tldr-8b}, \ref{fig:chat-4b}, \ref{fig:tldr-1.7b}, \ref{fig:ifeval-1.7b}; Table \ref{tab:qwen8b_combined}; Appendix \ref{app:eval-extra-robust}). Our analysis helps to connect these outcomes to the algorithmic choice. That is, rank-based advantages typically increase the norm of policy gradient update, especially for small group sizes $G$. This is consistent with faster KL-budget consumption and faster convergence during training. Qualitative examples further suggest that the same mechanism improves robustness in ways that matter for alignment, such as better calibration and fewer unsupported specifics. Overall, GOPO provides a simple and effective mechanism for RLHF in the non-verifiable reward setting.

% \section{Conclusion}
% Reward models in RLHF are trained on pairwise preferences, so their most reliable signal is ordinal—which response is better—yet many policy-optimization methods still treat reward scores as calibrated, cardinal values. We introduced GOPO, a replacement for GRPO that discards reward magnitudes and uses only within-prompt reward ranks, aligning the policy update with the comparisons the reward model is trained to make.

% Empirically, GOPO delivers stronger and more sample-efficient post-training than GRPO across tasks, model sizes, reward models, and sampling temperatures: it improves training/validation reward trajectories, yields favorable LLM-as-judge win rates, and achieves earlier gains on the partially verifiable IFEval benchmark (Figures \ref{fig:tldr-8b}, \ref{fig:chat-4b}, \ref{fig:tldr-1.7b}, \ref{fig:ifeval-1.7b}; Table \ref{tab:qwen8b_combined}; Appendix \ref{app:eval-extra-robust}). Our analysis connects these gains to the rank transform: it typically increases update norms at small group sizes $G$, consistent with faster KL-budget usage and quicker progress (Theorem \ref{thm:B1-gopo-inflation}), while remaining well-behaved as the number of samples per prompt grows (Appendix~\ref{app:thm-large-G}--\ref{app:kl}). Qualitative examples further suggest improved calibration and fewer unsupported specifics. Overall, GOPO provides a simple and effective principle for RLHF in non-verifiable settings: \textbf{treat reward-model outputs as ordinal}.

\newpage

% In the unusual situation where you want a paper to appear in the
% references without citing it in the main text, use \nocite
% \nocite{langley00}

\bibliography{ref}
\bibliographystyle{icml2026}

%%%%%%%%%%%%%%%%%%%%%%%%%%%%%%%%%%%%%%%%%%%%%%%%%%%%%%%%%%%%%%%%%%%%%%%%%%%%%%%
%%%%%%%%%%%%%%%%%%%%%%%%%%%%%%%%%%%%%%%%%%%%%%%%%%%%%%%%%%%%%%%%%%%%%%%%%%%%%%%
% APPENDIX
%%%%%%%%%%%%%%%%%%%%%%%%%%%%%%%%%%%%%%%%%%%%%%%%%%%%%%%%%%%%%%%%%%%%%%%%%%%%%%%
%%%%%%%%%%%%%%%%%%%%%%%%%%%%%%%%%%%%%%%%%%%%%%%%%%%%%%%%%%%%%%%%%%%%%%%%%%%%%%%
\newpage
\appendix
\onecolumn
% \section{Appendix}

\section{LLM-as-judge prompting}\label{app:judge}

We elaborate on the LLM-as-judge evaluation described in Section \ref{sec:eval}. We ask the judge to choose winners across the following $5$ criteria, then the final winner among the completions are chosen based on majority vote.

\begin{enumerate}
    \item \textit{Helpfulness}: How well does the response satisfy what the question asks for? Does it address the core needs of the prompt?
    \item \textit{Correctness}: Does the response contain factually accurate and relevant information? Are there any hallucinations, errors, or false information?
    \item \textit{Coherence}: Is the response clear, logical, and self-consistent? Does it flow well and make sense?
    \item \textit{Complexity}: What is the level of intellectual depth and sophistication? Consider vocabulary, sentence structure, and whether the response demonstrates basic or expert-level understanding.
    \item \textit{Verbosity}: Is the response appropriately concise or detailed relative to what the question asks for? Is it too brief, too verbose, or just right?
\end{enumerate}

\section{Additional evaluations}\label{app:eval-extra-robust}

\subsection{Reward trajectores and LLM-as-judge}\label{app:eval-extra}

\begin{figure*}[!ht]
    \centering
    \begin{tabular}{ccc}
        \includegraphics[width=0.32\linewidth]{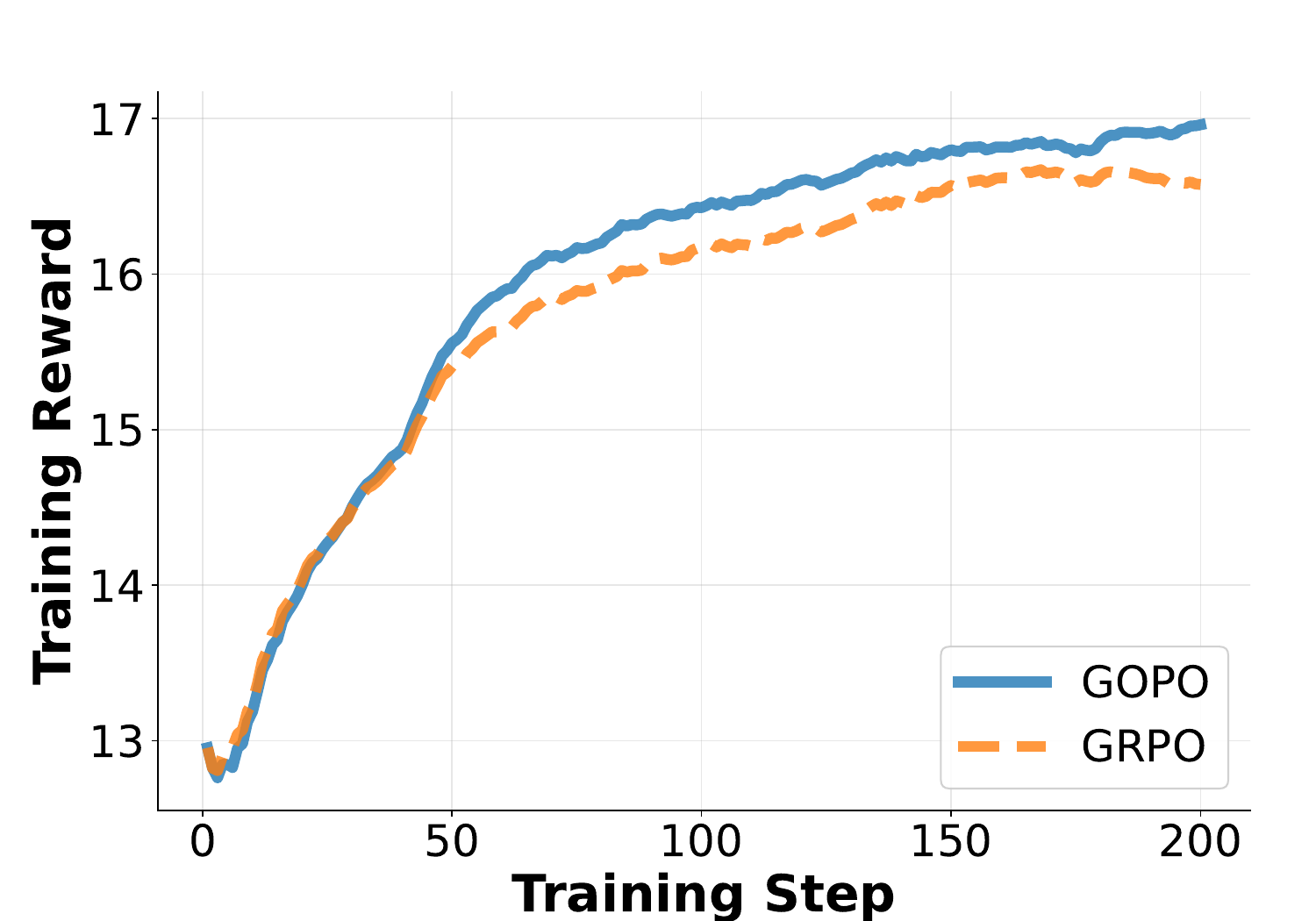} &
        \includegraphics[width=0.32\linewidth]{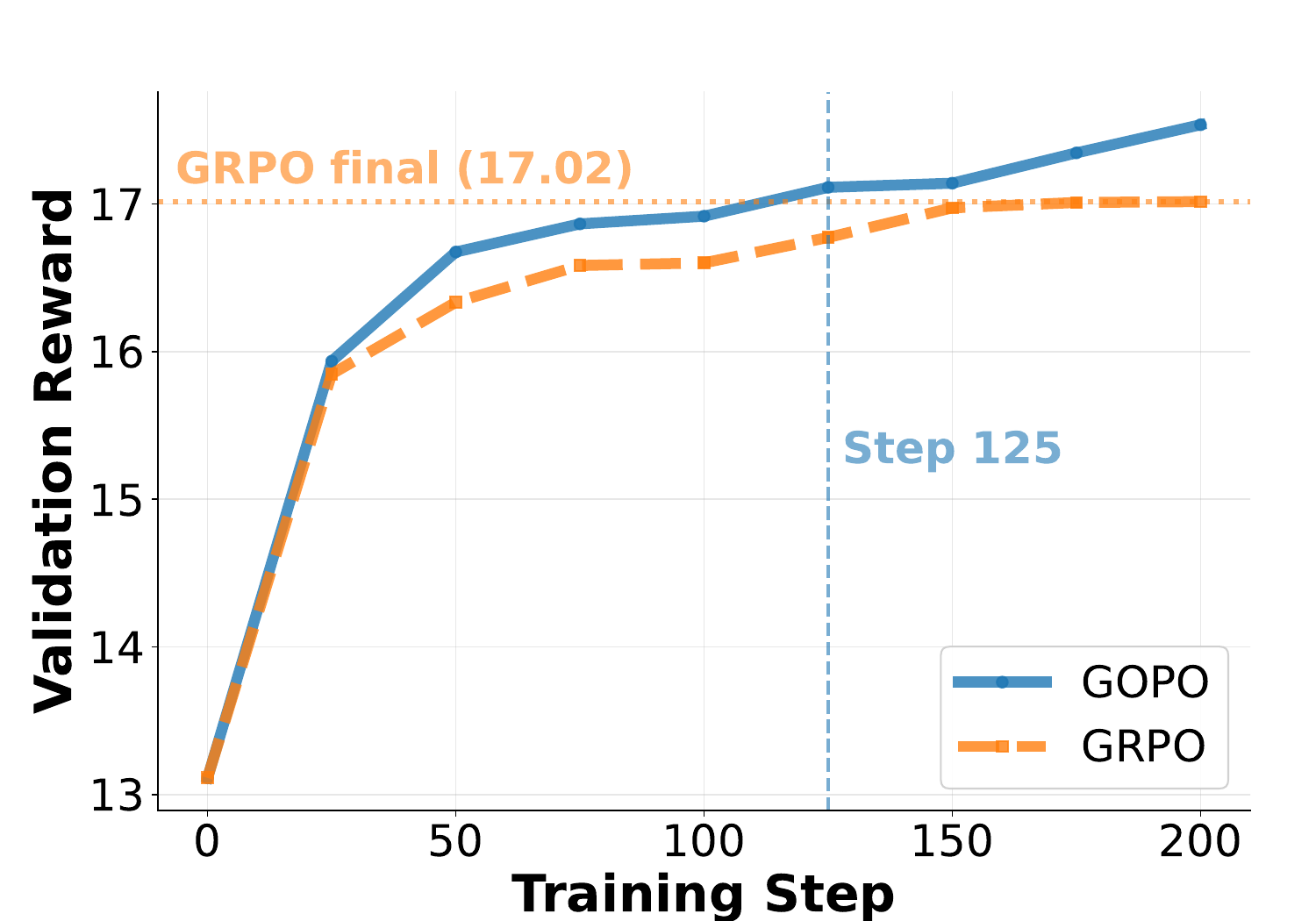} &
        \includegraphics[width=0.32\linewidth]{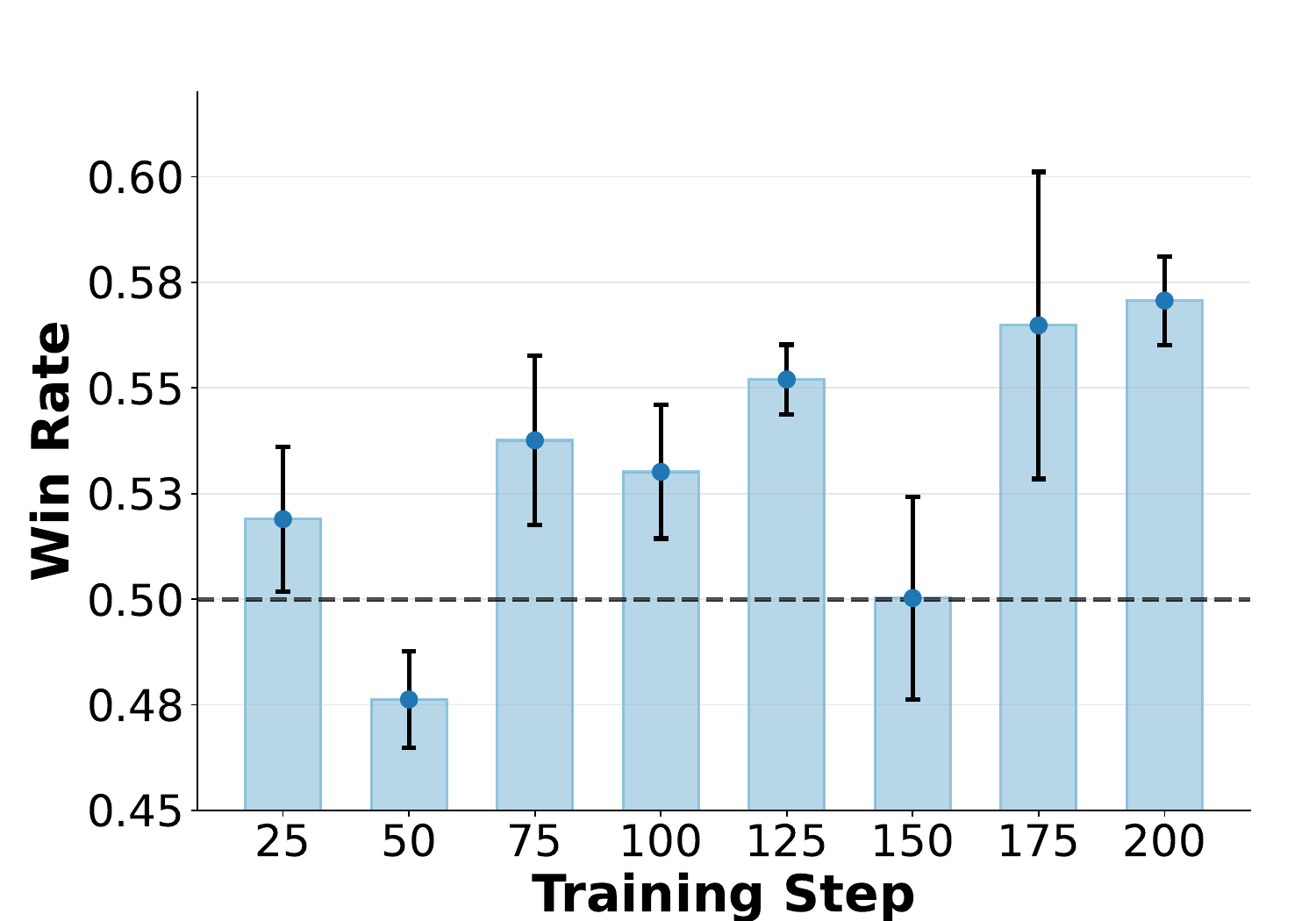} \\
        (a) Training Reward &
        (b) Validation Reward &
        (c) LLM-as-judge Evaluation
    \end{tabular}
    \caption{\textbf{Base model: \texttt{Qwen3-8B}, Reward model: Skywork (\texttt{Qwen3-8B}), Task: \texttt{UltraChat}.} Training reward (a) and validation reward (b) is the per-training step policy's generation mean reward using prompts in the training dataset and validation dataset respectively---both rewards are consistently higher for GOPO throughout training. Figure (c) refers to the LLM-as-judge (\texttt{gpt-5}) win-rate of GOPO updated policies against GRPO updated policies at the identical training steps---for \emph{multi-seed generations}, GOPO improves the win-rates throughout most training steps. The policy generation temperature is fixed at $0.5$; see Table \ref{tab:qwen8b_combined} in Section \ref{sec:tldrchat} for results on varying temperatures. Validation reward of GRPO at its last training step is achieved earlier for GOPO (step $125$), and the GOPO win-rate at its earlier training step against the final GRPO is $0.53$.}
    \label{fig:chat-8b}
\end{figure*}

\begin{figure*}[!ht]
    \centering
    \begin{tabular}{ccc}
        \includegraphics[width=0.32\linewidth]{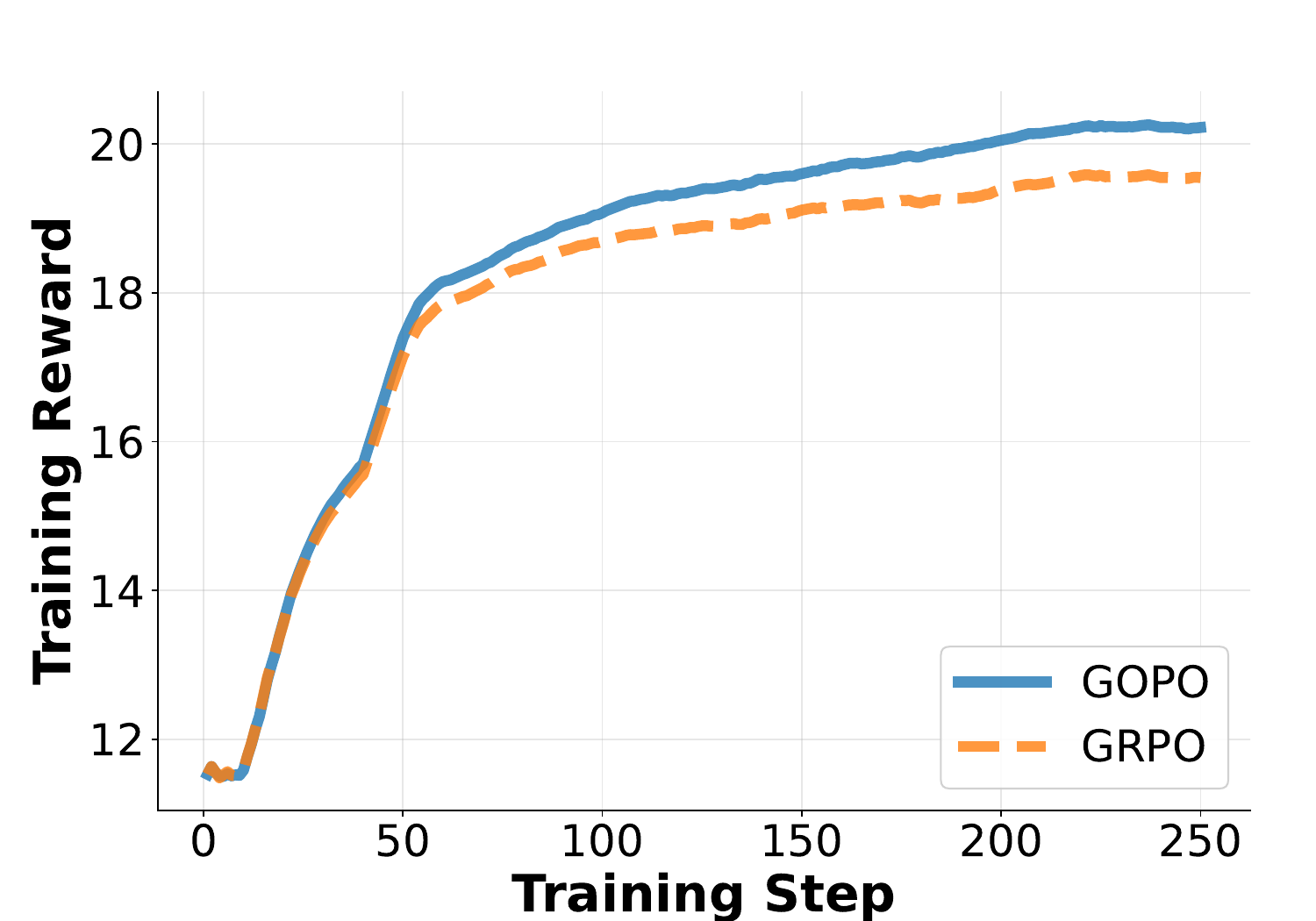} &
        \includegraphics[width=0.32\linewidth]{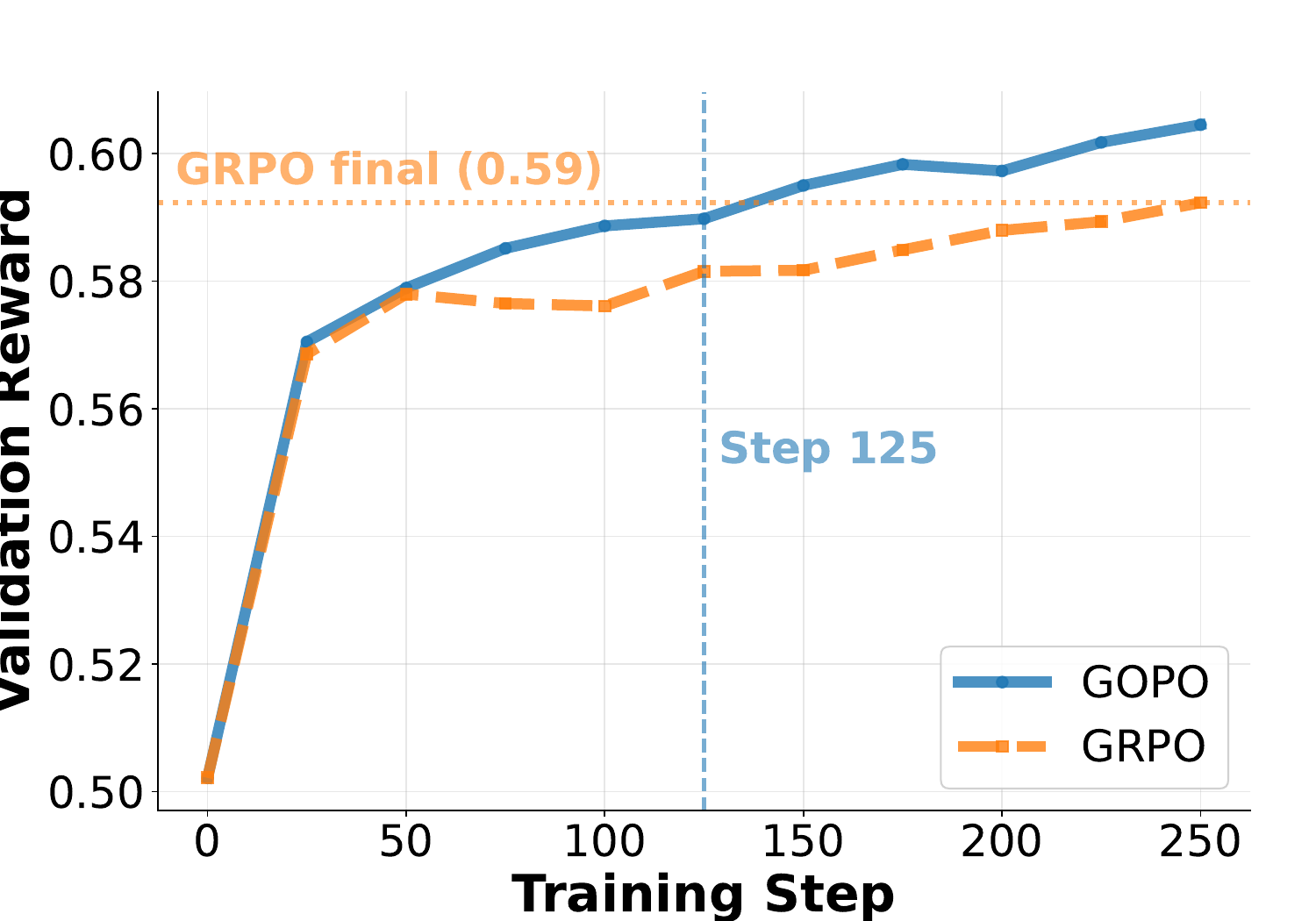} &
        \includegraphics[width=0.32\linewidth]{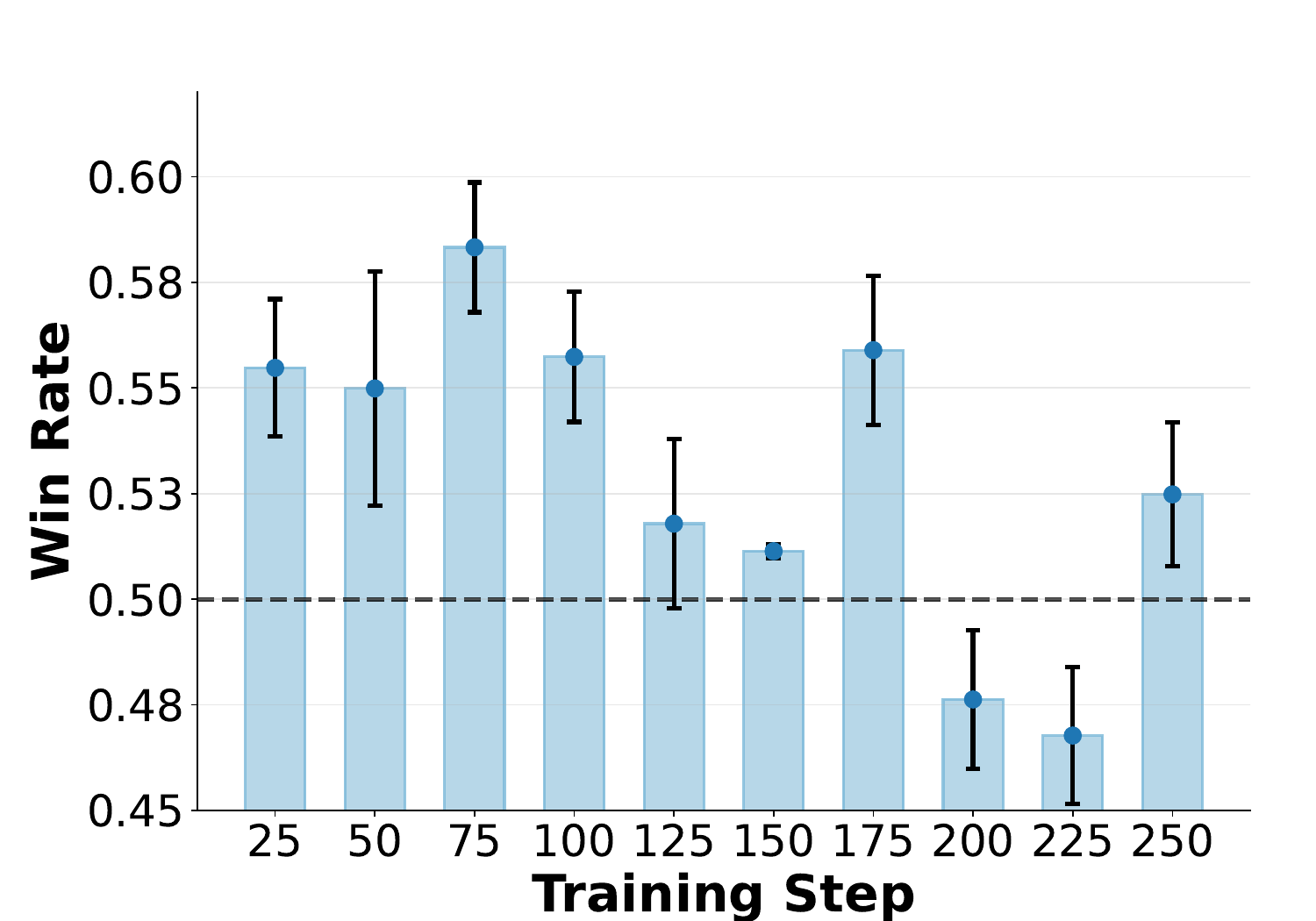} \\
        (a) Training Reward &
        (b) Validation Reward &
        (c) LLM-as-judge Evaluation
    \end{tabular}
    \caption{\textbf{Base model: \texttt{Qwen3-4B}, Reward model: Skywork (\texttt{Qwen3-8B}), Task: \texttt{TLDR}} 
    Training reward (a) and validation reward (b) is the per-training step policy's generation mean reward using prompts in the training dataset and validation dataset respectively---both rewards are consistently higher for GOPO throughout training. Figure (c) refers to the LLM-as-judge (\texttt{gpt-5}) win-rate of GOPO updated policies against GRPO updated policies at the identical training steps---for \emph{multi-seed generations}, GOPO improves the win-rates throughout most training steps. The policy generation temperature is fixed at $0.5$; see Table \ref{tab:qwen_small_combined} in Appendix \ref{app:robust} for results on varying temperatures. Validation reward of GRPO at its last training step is achieved earlier for GOPO (step $125$), and the GOPO win-rate at its earlier training step against the final GRPO is $0.50$.} 
    \label{fig:tldr-4b}
\end{figure*}

% \begin{figure*}[!ht]
%     \centering
%     \begin{tabular}{ccc}
%         \includegraphics[width=0.32\linewidth]{figure/tr_ultrachat_qwen1.7b_qrm_seed42.pdf} &
%         \includegraphics[width=0.32\linewidth]{figure/val_ultrachat_qwen1.7b_qrm_seed42.pdf} &
%         \includegraphics[width=0.32\linewidth]{figure/qwen1.7b_chat_qrm_N25_B25_temp0.5_survey_agg_ci.pdf} \\
%         (a) Training Reward &
%         (b) Validation Reward &
%         (c) LLM-as-judge Evaluation
%     \end{tabular}
%     \caption{\textbf{Base model: \texttt{Qwen3-1.7B}, Reward model: QRM~(\texttt{Llama-8B}), Task: \texttt{UltraChat}.} Training reward (a) and validation reward (b) is the per-training step policy's generation mean reward using prompts in the training dataset and validation dataset respectively---both rewards are consistently higher for GOPO throughout training. Figure (c) refers to the LLM-as-judge (\texttt{gpt-5}) win-rate of GOPO updated policies against GRPO updated policies at the identical training steps---GOPO consistently improves the win-rates throughout most training steps. The policy generation temperature is fixed at $0.7$. Validation reward of GRPO at its last training step is achieved earlier for GOPO (step $400$), and the GOPO win-rate at its earlier training step against the final GRPO is $0.51$.}
%     \label{fig:chat-1.7b}
% \end{figure*}

\newpage

\subsection{LLM-as-judge across sampling temperatures}\label{app:robust}

We refer to Table \ref{tab:qwen_small_combined} for additional LLM-as-judge GOPO win-rates for smaller sized base models.

\begin{table*}[h]
\centering
\caption{\textbf{LLM-as-judge GOPO win-rates against GRPO at identical training progress (\%).} Left block: Base model \texttt{Qwen3-1.7B}. Right block: Base model \texttt{Qwen3-4B}. Reward model Skywork (\texttt{Qwen3-8B}) used for all experiments except \texttt{Qwen3-1.7B} on \texttt{UltraChat}, which uses QRM (\texttt{Llama-8B}). Columns show sampling temperature $\tau$.}
\small
\setlength{\tabcolsep}{3pt}
\begin{tabular}{rccc|ccc|ccc|ccc}
\toprule
& \multicolumn{6}{c|}{\textbf{Base model: \texttt{Qwen3-1.7B}}} 
& \multicolumn{6}{c}{\textbf{Base model: \texttt{Qwen3-4B}}} \\
\cmidrule(lr){2-7}\cmidrule(lr){8-13}
& \multicolumn{3}{c|}{\textbf{\texttt{TLDR}}} 
& \multicolumn{3}{c|}{\textbf{\texttt{UltraChat}}} 
& \multicolumn{3}{c|}{\textbf{\texttt{TLDR}}} 
& \multicolumn{3}{c}{\textbf{\texttt{UltraChat}}} \\
\cmidrule(lr){2-4}\cmidrule(lr){5-7}\cmidrule(lr){8-10}\cmidrule(lr){11-13}
\textbf{Progress} 
& $\tau$ = 0.1 & $\tau$ = 0.5 & $\tau$ = 0.9
& $\tau$ = 0.1 & $\tau$ = 0.5 & $\tau$ = 0.9
& $\tau$ = 0.1 & $\tau$ = 0.5 & $\tau$ = 0.9
& $\tau$ = 0.1 & $\tau$ = 0.5 & $\tau$ = 0.9 \\
\midrule
10\%  & 0.53 & 0.51 & 0.54 & 0.51 & 0.48 & 0.55 & 0.57 & 0.54 & 0.56 & 0.52 & 0.47 & 0.50 \\
20\%  & 0.53 & 0.54 & 0.54 & 0.57 & 0.55 & 0.47 & 0.58 & 0.52 & 0.55 & 0.49 & 0.51 & 0.55 \\
30\%  & 0.57 & 0.54 & 0.58 & 0.50 & 0.51 & 0.50 & 0.59 & 0.60 & 0.58 & 0.52 & 0.53 & 0.48 \\
40\%  & 0.59 & 0.62 & 0.58 & 0.54 & 0.52 & 0.50 & 0.54 & 0.55 & 0.55 & 0.52 & 0.53 & 0.53 \\
50\%  & 0.56 & 0.54 & 0.53 & 0.51 & 0.51 & 0.52 & 0.50 & 0.52 & 0.53 & 0.52 & 0.52 & 0.51 \\
60\%  & 0.56 & 0.57 & 0.56 & 0.51 & 0.52 & 0.49 & 0.50 & 0.51 & 0.52 & 0.59 & 0.60 & 0.53 \\
70\%  & 0.52 & 0.52 & 0.59 & 0.51 & 0.48 & 0.49 & 0.56 & 0.57 & 0.53 & 0.54 & 0.56 & 0.49 \\
80\%  & 0.57 & 0.51 & 0.55 & 0.49 & 0.51 & 0.54 & 0.50 & 0.49 & 0.48 & 0.55 & 0.53 & 0.51 \\
90\%  & 0.56 & 0.57 & 0.55 & 0.48 & 0.56 & 0.48 & 0.52 & 0.46 & 0.49 & 0.58 & 0.55 & 0.52 \\
100\% & 0.53 & 0.58 & 0.58 & 0.54 & 0.50 & 0.52 & 0.51 & 0.54 & 0.49 & 0.54 & 0.52 & 0.55 \\
\bottomrule
\end{tabular}
\label{tab:qwen_small_combined}
\end{table*}

\section{Proof of Theorem \ref{thm:B1-gopo-inflation}}

Recall the conditioning event $\mc F=\sigma(q,A_1,\dots,A_G)$ and the (conditionally) centered feature vector $\widetilde X_i:=X_i-\E[X_i\mid \mc F]$. As $A_i$ is $\mc F$-measurable, we observe
\begin{equation}\label{eq:xi}
    \E[g\mid \mc F]
=\E\bigg[\frac{1}{G}\sum_{i=1}^G A_i X_i \bigg| \mc F \bigg]
=\frac{1}{G}\sum_{i=1}^G A_i\,\E[X_i\mid \mc F] \implies \xi:= g- \E[g\mid\mc F]
 = \frac{1}{G}\sum_{i=1}^G A_i\widetilde X_i.    
\end{equation}
Expand the squared norm of $\xi$ expressed as in (\ref{eq:xi}), and then apply the conditional orthogonality assumption, i.e., $\E[\langle \widetilde{X}_i, \widetilde{X}_j|\mc F] = 0$ for $i \neq j$, and the conditional variance assumption, i.e., $\E[\|\widetilde X_i\|^2\mid \mc F]=\sigma_X^2$ for all $i$, to observe 
\begin{equation}\label{eq:xi-2}
    \E[\|\xi\|^2\mid \mc F] = \frac{1}{G^2}\sum_{i=1}^G\sum_{j=1}^G A_iA_j\,
    \E\big[\langle \widetilde X_i,\widetilde X_j\rangle | \mc F\big] = \frac{1}{G^2}\sum_{i=1}^G A_i^2\,\sigma_X^2=\frac{\sigma_X^2}{G}\cdot \frac{1}{G}\sum_{i=1}^G A_i^2.    
\end{equation}
% By the conditional orthogonality assumption, the $i\neq j$ terms vanish, and by the conditional variance assumption,
% $\E[\|\widetilde X_i\|^2\mid \mc F]=\sigma_X^2(q)$ for all $i$. Therefore
% \[
% \E[\|\xi\|^2\mid \mc F]
% =
% \frac{1}{G^2}\sum_{i=1}^G A_i^2\,\sigma_X^2(q)
% =
% \frac{\sigma_X^2(q)}{G}\cdot \frac{1}{G}\sum_{i=1}^G A_i^2.
% \]
As a last step, we take the expectation over the remaining randomness in equation (\ref{eq:xi-2}), yielding
\[
\E[\|\xi\|^2]
=
\frac{1}{G}\,\E\bigg[\sigma_X^2\cdot \frac{1}{G}\sum_{i=1}^G A_i^2\bigg].
\]

\subsection{Applying Theorem \ref{thm:B1-gopo-inflation} for GOPO and GRPO}
We specialize Theorem \ref{thm:B1-gopo-inflation} to GOPO and GRPO using the following Lemma, so as to observe the gradient variance inflation brought by GOPO (see Section \ref{sec:why-rank} for details).

\begin{lemma}[Empirical Second Moment of Advantages]\label{lemma:helper-advantage-bounds}
    Recall the advantage definitions for GOPO and GRPO in (\ref{eq:rank-advantage}) and (\ref{eq:std-advantage}) respectively, where we drop index $t$ as both advantages are broadcasted across tokens. Then we have
\[
\frac{1}{G}\sum_{i = 1}^{G} (\hat A_i^{\mrm{std}})^2=1 \quad \text{and} \quad \frac{1}{G}\sum_{i = 1}^{G} (\hat A_i^{\mrm{rank}})^2=\frac{4(G+1)}{3(G-1)}.
\]

\end{lemma}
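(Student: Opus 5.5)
The plan is to compute each empirical second moment directly from its definition. Both advantages are broadcast across tokens, so everything reduces to a finite sum over $i\in[G]$ for a fixed prompt. I will make two standing conventions explicit. First, $\mathrm{std}(r_1,\dots,r_G)$ in (\ref{eq:std-advantage}) is the population standard deviation (normalized by $G$) and is nonzero, i.e.\ not all rewards coincide. Second, ties are broken so that $\rho$ is a bijection $[G]\to[G]$, as the definition of $\rho$ requires, and $G\ge 2$.

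\emph{GRPO.} Write $\bar r=\frac1G\sum_i r_i$ and $s^2=\frac1G\sum_i (r_i-\bar r)^2$. Then $\hat A_i^{\mathrm{std}}=(r_i-\bar r)/s$, so
\[
\frac1G\sum_{i=1}^G(\hat A_i^{\mathrm{std}})^2=\frac{1}{s^2}\cdot\frac1G\sum_{i=1}^G (r_i-\bar r)^2 = 1 .
\]
I will remark that if the sample standard deviation (divisor $G-1$) were used instead, the value would be $(G-1)/G$. This is the only point where the convention matters.

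\emph{GOPO.} Since $\rho$ is a bijection, $\{\hat A_i^{\mathrm{rank}}\}_{i=1}^G$ equals the multiset $\{2-\frac{4j}{G-1}: j=0,\dots,G-1\}$, after reindexing with $j=\rho(i)-1$. Expanding the square gives
\[
\sum_{j=0}^{G-1}\Big(2-\tfrac{4j}{G-1}\Big)^2 = 4G-\tfrac{16}{G-1}\sum_{j} j+\tfrac{16}{(G-1)^2}\sum_j j^2 .
\]
Next I insert $\sum_{j=0}^{G-1} j=\frac{G(G-1)}{2}$ and $\sum_{j=0}^{G-1} j^2=\frac{(G-1)G(2G-1)}{6}$. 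This yields $-4G+\frac{8G(2G-1)}{3(G-1)}$. Dividing by $G$ and putting over a common denominator gives
\[
\frac{-12(G-1)+8(2G-1)}{3(G-1)}=\frac{4(G+1)}{3(G-1)} .
\]

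There is no real obstacle here. The only care needed is fixing the normalization of $\mathrm{std}$ and the tie-breaking convention for $\rho$ so that both identities hold exactly. The remaining work is routine arithmetic in the power sums, which I would double-check at $G=2$: there the advantages are $\pm 2$, so the mean square is $4=\frac{4\cdot 3}{3\cdot 1}$, which matches.

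Combining the two identities with Theorem \ref{thm:B1-gopo-inflation} gives the ratio $\frac{4(G+1)}{3(G-1)}$ stated in Section \ref{sec:why-rank}. This step also needs $\sigma_X^2$ to have the same expectation under both advantage choices, because Theorem \ref{thm:B1-gopo-inflation} expresses each $\E\|\xi\|^2$ through $\E[\sigma_X^2\cdot\frac1G\sum_i \hat A_i^2]$. Since both mean squares are deterministic constants, they can be pulled out of that expectation.
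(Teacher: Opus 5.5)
Your proposal is correct and follows essentially the same route as the paper. The GRPO identity is immediate from the population-variance normalization $s^2=\frac{1}{G}\sum_j (r_j-\bar r)^2$, and the GOPO identity comes from reindexing the advantages as $2-k\Delta$ with $\Delta=4/(G-1)$, expanding the square, and inserting the power sums $\sum_k k$ and $\sum_k k^2$. Your explicit conventions (population standard deviation, tie-breaking so that $\rho$ is a bijection, and $G\ge 2$) are ones the paper uses implicitly. Your remark that the GOPO/GRPO ratio in the main text additionally requires $\E[\sigma_X^2]$ to agree under both advantage choices is a fair caveat, though it lies outside the lemma itself.
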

\begin{proof}[Proof of Lemma \ref{lemma:helper-advantage-bounds}] Recall the definition of ranked advantages (\ref{eq:rank-advantage}) for GOPO. Setting $\Delta = 4/(G - 1),$ we observe
\begin{equation}\label{eq:rank-ad-exp}
\begin{aligned}
\frac{1}{G}\sum_{i=1}^G (\hat A_i^{\mrm{rank}})^2
&= \frac{1}{G}\sum_{k=0}^{G-1} (2-k\Delta)^2\\
&= \frac{1}{G}\sum_{k=0}^{G-1} \big(4 - 4k\Delta + k^2\Delta^2\big)\\
&= 4 - 4\Delta\bigg(\frac{1}{G}\sum_{k=0}^{G-1}k\bigg)
+ \Delta^2\bigg(\frac{1}{G}\sum_{k=0}^{G-1}k^2\bigg).
\end{aligned}
\end{equation}
Using $\sum_{k=0}^{G-1}k=G(G-1)/2$ and $\sum_{k=0}^{G-1}k^2=G(G-1)(2G-1)/6$ on (\ref{eq:rank-ad-exp}), we obtain
\begin{equation}\label{eq:rank-ad-exp-2}
    \frac{1}{G}\sum_{i=1}^G (\hat A_i^{\mrm{rank}})^2
= 4 - 2\Delta(G-1) + \Delta^2\frac{(G-1)(2G-1)}{6}.
\end{equation}
Substituting $\Delta=\frac{4}{G-1}$ in (\ref{eq:rank-ad-exp-2}) gives our desired result
\begin{equation*}
    \frac{1}{G}\sum_{i=1}^G (\hat A_i^{\mrm{rank}})^2 =4 - 8 + \frac{16(2G-1)}{6(G-1)}
=
\frac{4(G+1)}{3(G-1)}. 
\end{equation*}

Next, recall the z-score advantages $\hat A_i^{\mrm{std}}=(r_i-\bar r)/s$ with $\bar r = G^{-1}\sum_{j = 1}^{G}r_j$ and $s^2=G^{-1}\sum_{j=1}^G(r_j-\bar r)^2$ for GRPO. Then we observe
\[
\frac{1}{G}\sum_{i=1}^G (\hat A_i^{\mrm{std}})^2
=
\frac{1}{G}\sum_{i=1}^G \frac{(r_i-\bar r)^2}{s^2}
=
\frac{1}{s^2}\cdot \frac{1}{G}\sum_{i=1}^G (r_i-\bar r)^2
=
1.
\]
\end{proof}

\section{Gradient comparison for large sample size}\label{app:thm-large-G}

Here we provide a Theorem that provides insight to the asymptotic (as per-prompt sample size $G$ increase) behavior of gradient norms updated by GOPO and GRPO. Unlike the setting in Theorem \ref{thm:B1-gopo-inflation}, we consider a multi-batch ($B>1$) scenario, but still maintain the smoothed version of the objective function; hence we set the objective function as 
\begin{equation*}
    \begin{aligned}
        \mc J(\theta) &= \frac{1}{B}\sum_{b = 1}^{B} \frac{1}{G} \sum_{i = 1}^{G} \frac{1}{T}\sum_{t = 1}^{T} \pi_t(\theta) \hat A^{(b)}_{i, t} - \beta  \mrm{KL}(\pi_\theta \| \pi_{\mrm{ref}})\\
        &=: \mc J_{1}(\theta) - \beta  \mrm{KL}(\pi_\theta \| \pi_{\mrm{ref}})
    \end{aligned}
\end{equation*}
where $\mc J_{1}(\theta)$ refers to the triple sum in the above display. 

Note the following
\begin{equation*}
    \begin{aligned}
        \nabla_\theta \mc J (\theta) 
        &= \frac{1}{B} \sum_{b = 1}^{B} \frac{1}{G} \sum_{i = 1}^{G} \frac{1}{T} \sum_{t = 1}^{T} \pi_t(\theta) \nabla_\theta \log \pi_t(\theta) \cdot \hat A_{i, t}^{(b)}- \beta \nabla_\theta \mrm{KL}(\pi_\theta \| \pi_{\mrm{ref}})
    \end{aligned}
\end{equation*}
We consider the gradient norm of the first sum of the above display. Define the random vector $X_i^{(b)} :=T^{-1}\sum_{t = 1}^{T} \nabla \log \pi_t(\theta) \pi_t(\theta)$; note that the index $i$ and $b$ are implicit in $\pi_t(\theta)$, see (\ref{eq:obj-func}).  

\begin{theorem}[Bounded Gradient Norms]\label{thm:grad-upper-bound}
Suppose the prompts $q$ are independently sampled from distribution $P_Q$. Further assume $\| X_i^{(b)} \| \leq C$ for all $i$ and $b$\footnote{Note that $C$ would scale with $T$.}.
\begin{enumerate}
    \item If $\hat A_{i, t}^{(b)}$ is the rank of rewards (\ref{eq:rank-advantage}), then $\| \nabla \mc J_1(\theta)\|\leq 2C$ almost surely.
    \item If $\hat A_{i, t}^{(b)}$ is the standardized rewards (\ref{eq:std-advantage}) and have sub-Gaussian behavior, then $\| \nabla_\theta \mc J_1(\theta) \| \leq O_p\big(\sqrt{\log G}\big)$ as $B, G \to \infty$.
\end{enumerate}

\end{theorem}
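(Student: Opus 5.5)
The plan is to write $\nabla_\theta \mc J_1(\theta) = \frac{1}{B}\sum_{b=1}^{B}\frac{1}{G}\sum_{i=1}^{G}\hat A_i^{(b)} X_i^{(b)}$, using that advantages are broadcast across tokens. Both parts then follow from the triangle inequality together with the bound $\|X_i^{(b)}\|\le C$:
\[
\|\nabla_\theta \mc J_1(\theta)\| \le \frac{C}{BG}\sum_{b=1}^{B}\sum_{i=1}^{G}|\hat A_i^{(b)}| \le C\max_{b,i}|\hat A_i^{(b)}|.
\]
What remains is to control the size of the advantages in each case.

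\textbf{Part 1 (rank advantages).} By the definition (\ref{eq:rank-advantage}), $\hat A_i^{\mrm{rank}}\in[-2,2]$ deterministically, since $\rho(i)\in[G]$. Hence $\max_{b,i}|\hat A_i^{(b)}|\le 2$, and the bound $2C$ holds almost surely. This holds uniformly in $B$ and $G$ and needs no probabilistic input. One could refine it using the averaged form $\frac{1}{G}\sum_i|\hat A_i|\le 2$, but the crude bound already suffices.

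\textbf{Part 2 (standardized advantages).} Applying the same display, it suffices to show $\max_{b,i}|\hat A_i^{\mrm{std},(b)}| = O_p(\sqrt{\log G})$. Alternatively, one can keep the averaged form and bound the average directly. Under the sub-Gaussian assumption, $r_i^{(b)}$ conditional on $q_b$ is sub-Gaussian with variance proxy $\nu^2$. The sample mean $\bar r^{(b)}$ converges to the conditional mean, and $s^{(b)}$ converges to the conditional standard deviation, which is bounded away from zero. A union bound over $i\le G$ then gives the standard maximal inequality $\max_i|r_i^{(b)}-\mu_b|\lesssim \nu\sqrt{2\log G}$ with high probability, so $\max_i|\hat A_i^{(b)}| = O_p(\sqrt{\log G})$ for each prompt. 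The maximum over $b$ is the delicate point. Taking a union over both $B$ and $G$ would give $\sqrt{\log(BG)}$. To avoid this, I would use the average over $b$ rather than the maximum: since the prompts are independent, the per-prompt quantities $\frac{1}{G}\sum_i|\hat A_i^{(b)}|\le \max_i|\hat A_i^{(b)}|$ are i.i.d. across $b$ with mean $O(\sqrt{\log G})$. Markov's inequality (or the law of large numbers as $B\to\infty$) then gives the average over $b$ as $O_p(\sqrt{\log G})$.

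\textbf{Main obstacle.} The hard step is making the per-prompt bound $\E\max_i|\hat A_i^{(b)}|\lesssim\sqrt{\log G}$ rigorous, because the denominator $s^{(b)}$ is random. I would handle this by working on the event $\{s^{(b)}\ge \sigma_b/2\}$, whose probability tends to one as $G\to\infty$ by concentration of the sample variance. On that event, $|\hat A_i^{(b)}|\le 2(|r_i-\mu_b|+|\bar r-\mu_b|)/\sigma_b$, and the sub-Gaussian maximal inequality applies. The complementary event contributes a vanishing probability, which is all that an $O_p$ statement requires. I also note that the cruder Cauchy–Schwarz route actually gives $\frac{1}{G}\sum_i|\hat A_i|\le 1$ by Lemma \ref{lemma:helper-advantage-bounds}, so the stated $O_p(\sqrt{\log G})$ bound, which comes from the max-based argument, holds a fortiori. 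I would present the max-based argument to match the statement and remark on this sharper bound.
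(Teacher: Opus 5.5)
Your proposal is correct. The opening triangle-inequality reduction and Part 1 match the paper exactly: the paper also reduces to $\|\nabla_\theta \mathcal J_1(\theta)\|\le \frac{C}{B}\sum_b \max_i|\hat A_i^{(b)}|$ and then uses $\hat A_i^{\mathrm{rank}}\in[-2,2]$.

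\textbf{Part 2: how the paper argues.} The paper simply \emph{assumes} that the standardized advantages have uniformly bounded $\psi_2$-norm. It then applies the sub-Gaussian maximal inequality to get $\mathbb E\max_i|\hat A_i^{(b)}|\lesssim\sqrt{\log G}$. Finally it applies Hoeffding-type concentration to the independent centered variables $Y_b-\mathbb E Y_b$, where $Y_b=\max_i|\hat A_i^{(b)}|$. This yields an exponential-in-$B$ tail, which is stronger than the bare $O_p$ you obtain from Markov.

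\textbf{Your max-based route.} It differs from the paper's in two ways. First, you start from sub-Gaussian rewards and a good event for $s^{(b)}$, which quietly needs $\nu_b/\sigma_b$ to stay bounded. Second, you use Markov or the LLN instead of Hoeffding. As written it also has a loose step. Once you average over $B\to\infty$ prompts, ``the bad event has vanishing probability'' is not enough. You need the bad event's contribution to $\mathbb E Y_b$ to be $O(\sqrt{\log G})$. One way to get this is to combine the deterministic bound $|\hat A_i^{(b)}|\le\sqrt G$ (immediate from Lemma~\ref{lemma:helper-advantage-bounds}) with an exponentially small probability for $\{s^{(b)}<\sigma_b/2\}$.

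\textbf{The Cauchy--Schwarz remark is the better proof.} The argument you relegate to a remark is genuinely different from the paper's and should be your main proof. By Cauchy--Schwarz and Lemma~\ref{lemma:helper-advantage-bounds}, $\frac1G\sum_i|\hat A_i^{(b)}|\le\big(\frac1G\sum_i(\hat A_i^{(b)})^2\big)^{1/2}=1$. Hence $\|\nabla_\theta\mathcal J_1(\theta)\|\le C$ surely, for every $B$ and $G$. This needs no sub-Gaussian assumption, no independence across prompts, and no limit. It is strictly stronger than the stated $O_p(\sqrt{\log G})$, and it makes the max-based machinery, including the loose step above, unnecessary.

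It also shows that the $\sqrt{\log G}$ in the paper's bound comes only from replacing an average of $|\hat A_i|$ by its maximum. The same Cauchy--Schwarz step gives $C\sqrt{4(G+1)/(3(G-1))}>C$ for GOPO. So under the theorem's own hypotheses, the GRPO bound is the smaller of the two. The theorem therefore gives no support to the paper's gloss that GRPO gradient norms ``tend to grow with $G$.''
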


\subsection{Proof of Theorem \ref{thm:grad-upper-bound}}

Further note that the advantage is broadcasted across tokens, meaning that $\hat A_{i, t}$ is identical across $t$, so we set $\hat A_{i}^{(b)} = \hat A_{i, t}^{(b)}$ for all $t \in [T]$. Then observe
\begin{equation}\label{eq:grad-up-pre}
    \begin{aligned}
        \frac{1}{B}\sum_{b = 1}^{B}\frac{1}{G}\sum_{i = 1}^{G}\hat A_{i}^{(b)}X_i^{(b)} \leq \frac{1}{B}\sum_{b = 1}^{B} \max_{i \in [G]}\big|\hat A_{i}^{(b)}\big| \cdot \bigg(\frac{1}{G}\sum_{i = 1}^{G} X_i^{(b)} \bigg).
    \end{aligned}
\end{equation}

Using (\ref{eq:grad-up-pre}), the gradient norm of $\nabla \mc J_1(\theta)$ can be bounded as 
\begin{equation}\label{eq:jone-norm-bound}
    \begin{aligned}
        \| \nabla_\theta \mc J_1(\theta) \| \leq \frac{1}{B}\sum_{b = 1}^B \max_{i \in[G]}|\hat A_{i}^{(b)}| \cdot \bigg( \frac{1}{G}\sum_{i = 1}^G \|X_i^{(b)}\| \bigg).
    \end{aligned}
\end{equation}
Use the assumption that $X_i^{(b)}$ have bounded norm, i.e., $\|X_i^{(b)}\|\leq C$ for some absolute so that (\ref{eq:jone-norm-bound}) is further bounded by $\|\nabla_\theta \mc J_1 (\theta) \| \leq CB^{-1}\sum_{b = 1}^{B}\max_{i \in [G]}\big| \hat A_i^{(b)} \big|$. For what follows, we bound the term $\max_{i \in [G]}\big| \hat A_{i}^{(b)} \big|$. 

We now invoke two different types of $\hat A_{i}^{(b)}$: first is the ranking used for our proposed algorithm (\ref{eq:rank-advantage}) and the second is used for the normal GRPO algorithm (\ref{eq:std-advantage}).

First, when the advantage is set as the rank of the rewards, we observe that $\hat A_{i}^{(b)} \in [-2, 2]$ for all $i$ and $b$, so (\ref{eq:jone-norm-bound}) can be further bounded as $\| \nabla_\theta \mc J_1(\theta) \| \leq 2C$.

Next, we consider the z-score advantages. For any $b \in [B]$ note the following decomposition
\begin{equation*}
    \begin{aligned}
        \max_{i \in [G]}\big| \hat A_{i}^{(b)} \big| = \max_{i \in [G]}\big| \hat A_{i}^{(b)} \big| - \E\Big[ \max_{i \in [G]}\big| \hat A_i^{(b)} \big| \Big] + \E\Big[ \max_{i \in [G]}\big| \hat A_i^{(b)} \big| \Big].
    \end{aligned}
\end{equation*}

When the advantage is set as the $z$-score of the rewards, the sub-Gaussian reward implies the following \cite{ledoux2001concentration}: for some absolute constants $c, c' > 0$, 
\begin{equation}\label{eq:max-sG}
    \begin{aligned}
        \Big\|\max_{i \in [G]}\big| \hat A_i^{(b)} \big|\Big\|_{\psi_2} \leq c \sqrt{\log G}\max_{i \in [G]}\big\| \hat A_{i}^{(b)} \big\|_{\psi_2} \leq c' \sqrt{\log G}
    \end{aligned}
\end{equation}
where the last inequality is due to the assumption that the \textit{standardized} advantages are uniformly bounded in their $\psi_2$-norm. So (\ref{eq:max-sG}) implies that $\max_{i \in [G]} \big| \hat A_i^{(b)} \big|$ are sub-Gaussian with parameters that logarithmically scale with $G$. Further, recall that the $L_1$ norm is bounded by the $\psi_2$-norm, hence we observe for some constant $c$,
\begin{equation}\label{eq:max-ineq}
    \begin{aligned}
        \E\Big[ \max_{i \in [G]}\big| \hat A_i^{(b)} \big| \Big] \leq c\sqrt{\log G}.
    \end{aligned}
\end{equation}
% Note that the second inequality in the above display is due to the fact that $\hat A_{i}^{(b)}$ is standardized. 

For notational simplicity, set $Y_b := \max_{i \in [G]}\big| \hat A_i^{(b)} \big|$. Notice the inequality
\begin{equation*}
    \begin{aligned}
        \Prob\Big( \| \nabla_\theta \mc J_1(\theta) \| \geq t \Big) &\leq \Prob \bigg( \sum_{b = 1}^{B} \big\{ Y_b - \E[Y_b] \big\}  \geq B\cdot \big(t/C - c\cdot \sqrt{\log G} \big)\bigg)
    \end{aligned}
\end{equation*}
via (\ref{eq:max-ineq}) and the assumption $\| X_i \| \leq C$. As $Y_b - \E[Y_b]$ are independent, centered, sub-Gaussian concentration~\cite{vershynin2018high} implies
\begin{equation}\label{eq:sg-conc}
    \begin{aligned}
        \Prob\Big( \| \nabla_\theta \mc J_1(\theta) \| \geq t \Big) \leq 2 \exp\Bigg( \frac{-B \cdot \big( t/C - c\cdot \sqrt{\log G} \big)^2}{2c' \sqrt{\log G}} \Bigg).
    \end{aligned}
\end{equation}
So setting $t = 2c\sqrt{\log G}$ in (\ref{eq:sg-conc}), we conclude that
\begin{equation*}
    \| \nabla_\theta \mc J_1(\theta) \| < c' \sqrt{\log G} \quad \text{with probability at least $1 - \exp\big( -cB \sqrt{\log G} \big)$.}
\end{equation*}

\section{KL divergence}\label{app:kl}

% \begin{figure}
%     \centering
%     \begin{subfigure}{0.48\linewidth}
%         \centering
%         \includegraphics[width=\linewidth]{figure/kl_tldr_regular_all_models_seed42.pdf}
%         \caption{All model sizes for GRPO update}
%         \label{fig:tldr-kl-a}
%     \end{subfigure}\hfill
%     \begin{subfigure}{0.48\linewidth}
%         \centering
%         \includegraphics[width=\linewidth]{figure/kl_tldr_qwen1.7b_seed42.pdf}
%         \caption{}
%         \label{fig:tldr-kl-b}
%     \end{subfigure}
    
%     \vspace{0.5em}
    
%     \begin{subfigure}{0.48\linewidth}
%         \centering
%         \includegraphics[width=\linewidth]{figure/kl_tldr_qwen4b_seed42.pdf}
%         \caption{}
%         \label{fig:tldr-kl-c}
%     \end{subfigure}\hfill
%     \begin{subfigure}{0.48\linewidth}
%         \centering
%         \includegraphics[width=\linewidth]{figure/kl_tldr_qwen8b_seed42.pdf}
%         \caption{}
%         \label{fig:tldr-kl-d}
%     \end{subfigure}
%     \caption{Your main caption here}
%     \label{fig:tldr-kl-1}
% \end{figure}

\begin{figure*}[!ht]
    \centering
    \begin{tabular}{ccc}
        \includegraphics[width=0.32\linewidth]{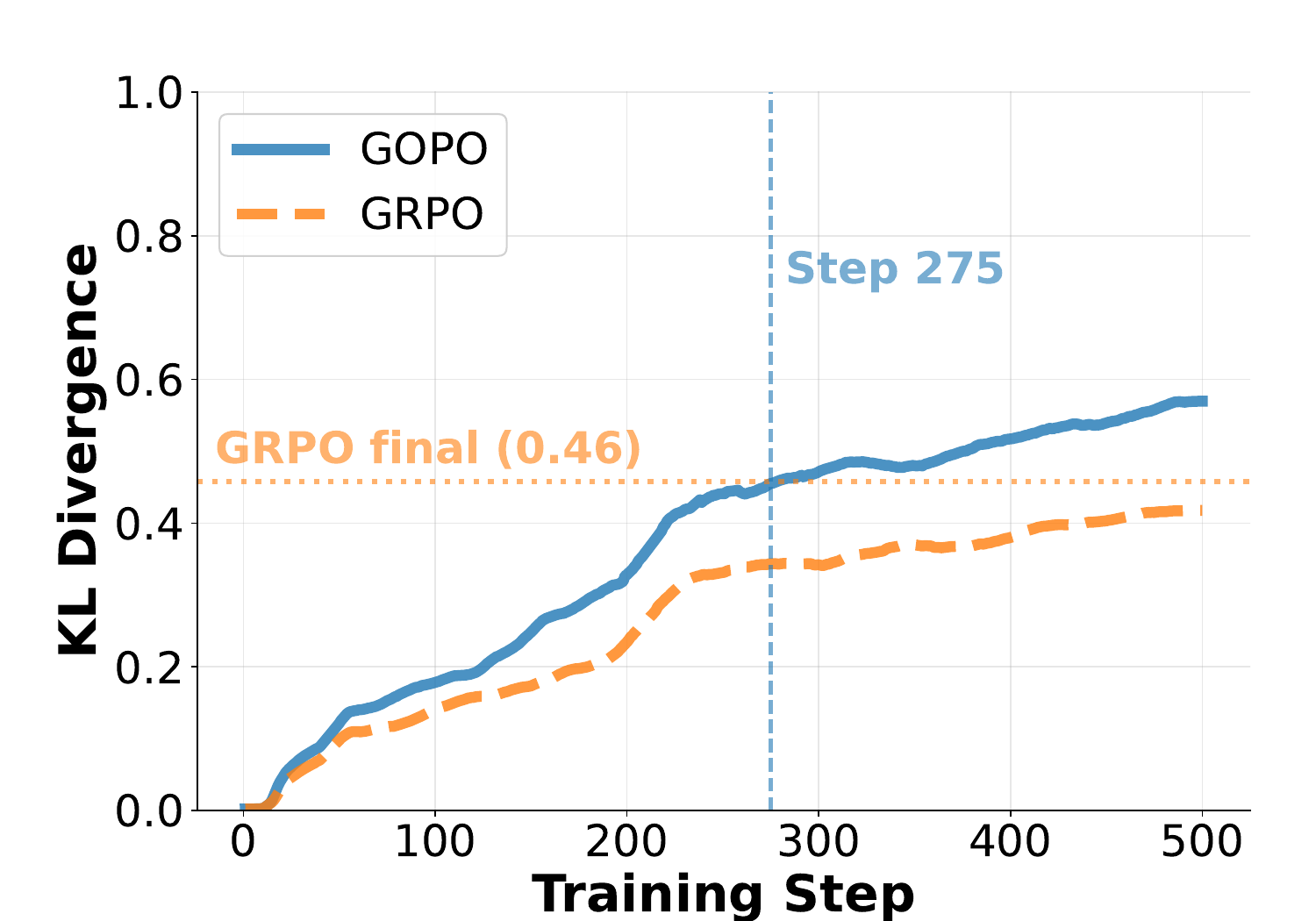} &
        \includegraphics[width=0.32\linewidth]{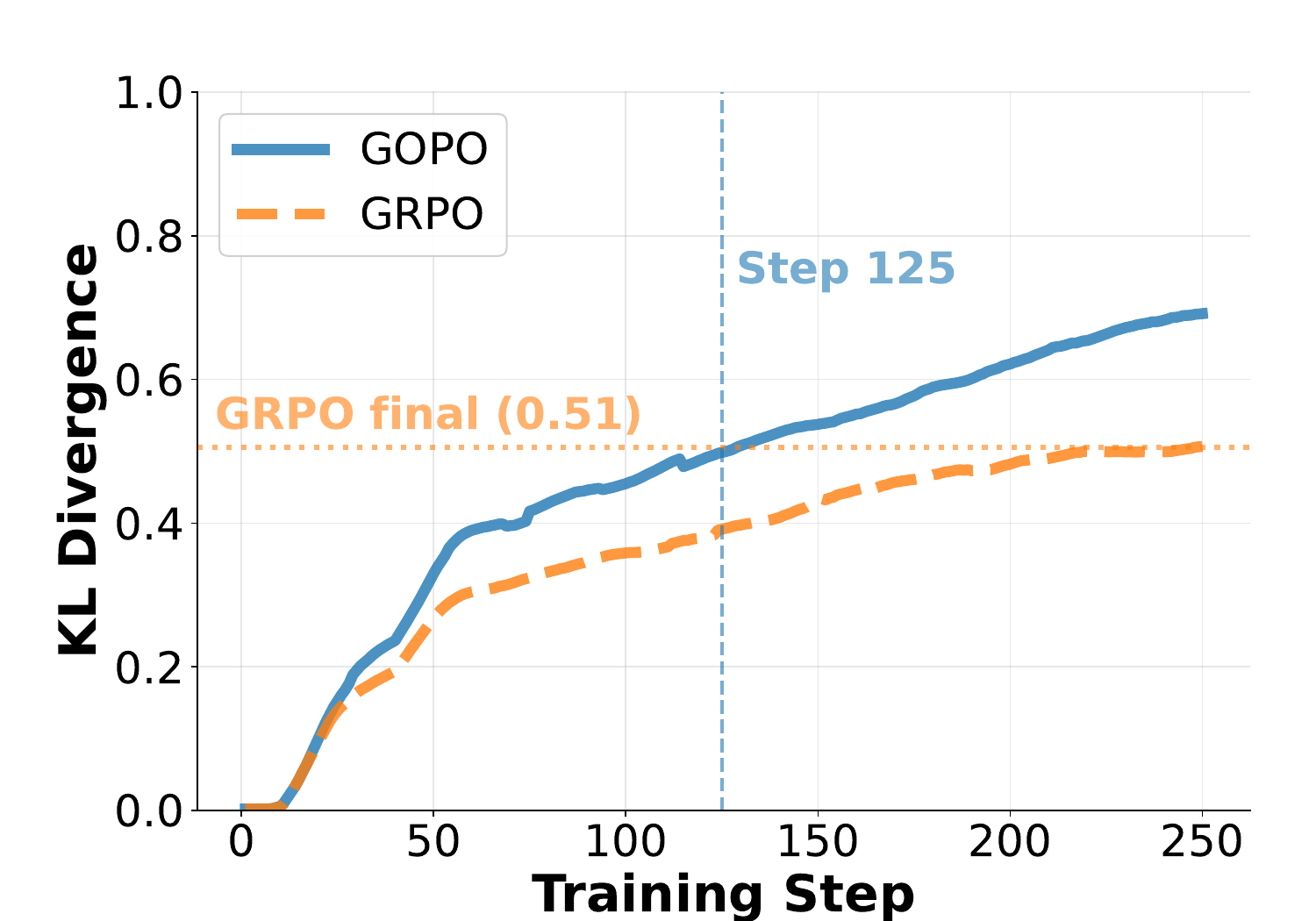} &
        \includegraphics[width=0.32\linewidth]{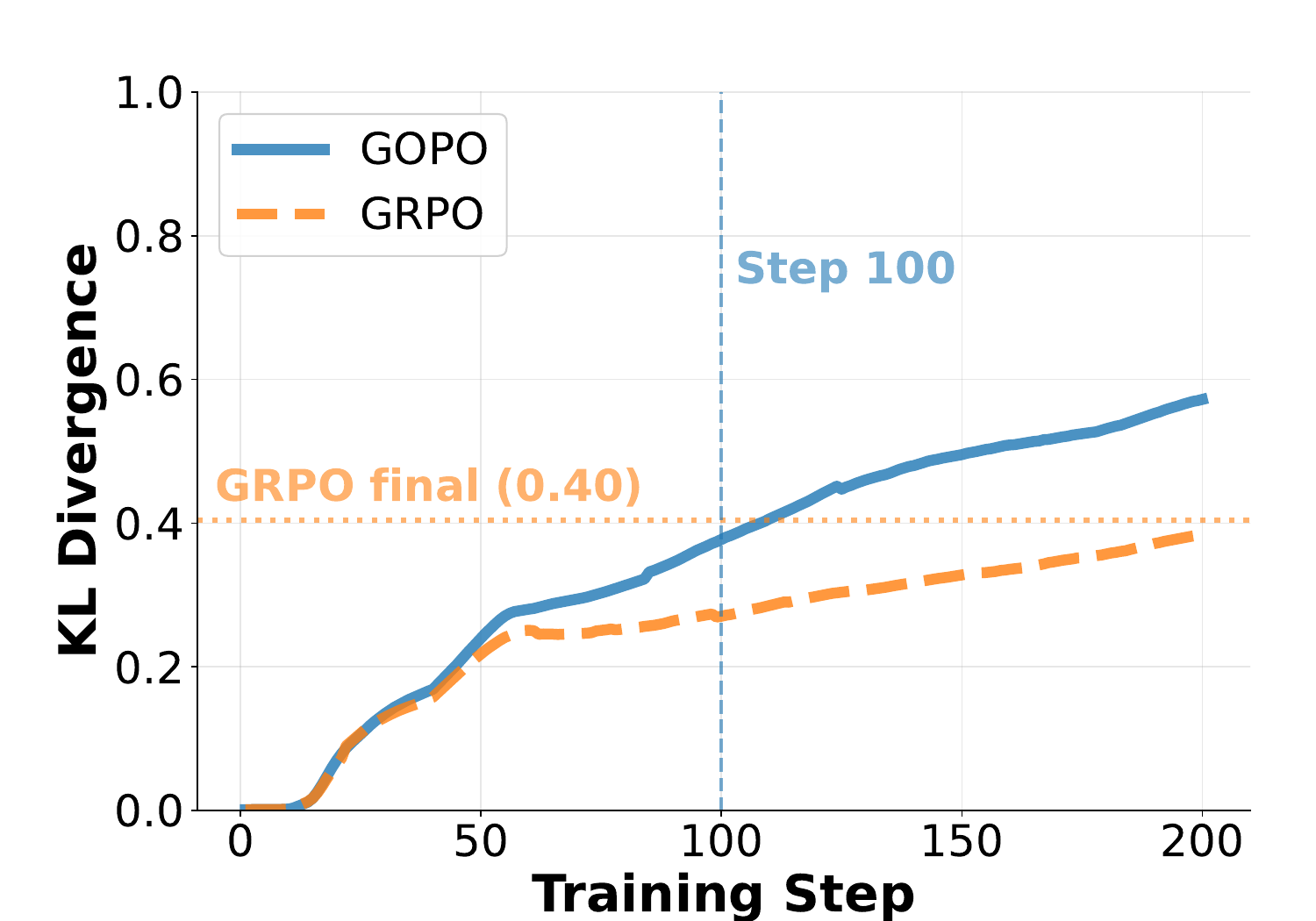} \\
        (a) \texttt{Qwen-1.7B} &
        (b) \texttt{Qwen-4B} &
        (c) \texttt{Qwen-8B} 
    \end{tabular}
    \caption{\textbf{KL divergence trajectory for \texttt{TLDR} across different model sizes.}}
    \label{fig:kl}
\end{figure*}

% \begin{figure}
%     \centering
%     \begin{minipage}{0.48\linewidth}
%         \centering
%         \includegraphics[width=\linewidth]{figure/kl_tldr_regular_all_models_seed42.pdf}
%     \end{minipage}\hfill
%     \begin{minipage}{0.48\linewidth}
%         \centering
%         \includegraphics[width=\linewidth]{figure/kl_tldr_qwen1.7b_seed42.pdf}
%     \end{minipage}
%     \caption{}
%     \label{fig:tldr-kl-1}
% \end{figure}

% \begin{figure}
%     \centering
%     \begin{minipage}{0.48\linewidth}
%         \centering
%         \includegraphics[width=\linewidth]{figure/kl_tldr_qwen4b_seed42.pdf}
%     \end{minipage}\hfill
%     \begin{minipage}{0.48\linewidth}
%         \centering
%         \includegraphics[width=\linewidth]{figure/kl_tldr_qwen8b_seed42.pdf}
%     \end{minipage}
%     \caption{}
%     \label{fig:tldr-kl-2}
% \end{figure}

% \begin{figure}
%     \centering
%     \begin{minipage}{0.48\linewidth}
%         \centering
%         \includegraphics[width=\linewidth]{figure/tldr_kl_survey_no_ci.pdf}
%     \end{minipage}\hfill
%     \begin{minipage}{0.48\linewidth}
%         \centering
%         \includegraphics[width=\linewidth]{figure/kl_tldr_qwen1.7b_seed42.pdf}
%     \end{minipage}
%     \caption{}
%     \label{fig:tldr-kl-2}
% \end{figure}

\section{Text examples}\label{app:examples}

% Preamble:
% \usepackage{xcolor}
% \usepackage{soul}      % for \hl{}
% \usepackage{tcolorbox} % for prompt box

% Highlight helpers (scoped so colors don't leak)
\newcommand{\hlp}[1]{{\sethlcolor{yellow!25}\hl{#1}}}   % source-text anchors
\newcommand{\hlok}[1]{{\sethlcolor{green!20}\hl{#1}}}   % grounded / consistent phrasing
\newcommand{\hlbad}[1]{{\sethlcolor{red!20}\hl{#1}}}    % contradiction / unsupported phrasing

\begin{figure*}[t]
\centering
\small

\begin{tcolorbox}[colback=gray!5,colframe=black!60,title=\textbf{Prompt (Summarization)}]
I have restricted my calories for two weeks, and every single day I have been tired, have found it hard to think, and I have had the most horrible headaches. \dots
As soon as I went back to my old habits, all the symptoms stopped.

\dots I drink plenty of water (at least 2L per day, often more).
\hlp{I haven't given up coffee, so I'm sure it's not caffeine withdrawal causing my headaches.}
\hlp{I don't consume any artificial sweeteners, so it's not that.}
\dots

% Is it just that I am not eating enough calories? \dots
\end{tcolorbox}

\vspace{6pt}

\begin{tabular}{p{0.48\linewidth} p{0.48\linewidth}}
\toprule

\multicolumn{2}{c}{\textbf{Early Training (Step 75)}} \\
\midrule
\textbf{Ranking Model} & \textbf{Regular Model} \\

The user is experiencing severe headaches, fatigue, and cognitive impairment after starting a calorie-restricted diet (1250-1500 calories/day) for weight loss. Despite tracking intake, they haven't lost weight and had to stop the diet due to academic stress. They suspect under-eating but fear eating more might hinder weight loss. They've ruled out {caffeine}, sweeteners, and lack of exercise as causes. They seek advice on whether their symptoms are due to insufficient calories or other factors. Their TDEE is 2453, but they're eating well below that. The post highlights the risks of extreme calorie restriction and the need for balanced, sustainable approaches. (99 words)
&
The user is experiencing severe headaches, fatigue, and cognitive issues after starting a calorie-restricted diet (1200-1500 calories/day) for weight loss. Despite tracking intake and \hlbad{avoiding potential triggers like caffeine or sweeteners}, symptoms worsened, forcing them to stop the diet to avoid impacting academic performance. They are concerned they're not eating enough, but haven't lost weight, and fear eating more might lead to weight gain. They seek advice on why these symptoms are occurring and how to safely approach calorie restriction. (99 words)
\\

\midrule
\multicolumn{2}{c}{\textbf{Mid Training (Step 150)}} \\
\midrule
\textbf{Ranking Model} & \textbf{Regular Model} \\

The poster is experiencing severe headaches, fatigue, and cognitive slowing after starting a calorie-restricted diet (1200-1500 calories/day) for weight loss. Despite tracking intake, they haven't lost weight and had to stop the diet due to academic performance issues. They suspect under-eating is the cause, but fear eating more might lead to weight gain. They've ruled out caffeine, sweeteners, and exercise as factors. Seeking advice on whether their symptoms are due to calorie restriction or other causes, and how to proceed safely. (99 words)
&
The user is experiencing severe headaches, fatigue, and cognitive slowing after two weeks of calorie restriction (1250-1500/day) for weight loss. They stopped the diet due to its impact on academic performance, as symptoms resolved when returning to normal eating. They are 5'8" and 242 lbs, with a TDEE of ~2453 calories. They are tracking calories with MFP, drink plenty of water, and \hlbad{avoid caffeine/artificial sweeteners}. They fear eating more due to lack of weight loss and concerns about weight gain. They seek advice on whether their symptoms are due to insufficient calorie intake or other factors. (99 words)
\\

% \multicolumn{2}{p{0.96\linewidth}}{\textit{GPT-5 evaluation (step 150, verbatim): ``While both responses are summaries rather than actionable advice, B is more accurate, coherent, and appropriately concise. A offers marginally more detail but introduces an inaccuracy and does not improve helpfulness.''}} \\[4pt]

\midrule
\multicolumn{2}{c}{\textbf{Late Training (Step 200)}} \\
\midrule
\textbf{Ranking Model} & \textbf{Regular Model} \\

The poster is experiencing severe headaches, fatigue, and cognitive slowing after starting a calorie-restricted diet (1250-1500 calories/day) for weight loss. They note symptoms disappeared when returning to normal eating, suggesting underfueling. Despite tracking calories with MFP and a scale, they haven't lost weight and fear eating more might lead to weight gain. They rule out caffeine, sweeteners, and exercise as causes. They suspect their calorie intake is too low, but are hesitant to increase it due to weight concerns. Seeking advice on safe, sustainable calorie intake for weight loss without adverse effects. (99 words)
&
The poster is experiencing severe headaches, fatigue, and cognitive issues after two weeks of calorie restriction (1250-1500/day) for weight loss. Despite tracking calories, water intake, and \hlbad{avoiding caffeine/artificial sweeteners}, symptoms worsened, forcing them to stop the diet to avoid impacting academic performance. They are unsure if their caloric intake is too low, as they haven't lost weight and fear eating more may lead to weight gain. They seek advice on why these symptoms are occurring and how to safely approach calorie restriction. (99 words)
\\

\bottomrule
\end{tabular}

\vspace{6pt}
\caption{
\textbf{Grounded qualitative comparison (base model: \texttt{Qwen-8B}) on a diet/health \texttt{TLDR} prompt.}
Yellow highlight marks explicit source facts about caffeine and sweeteners.
Red highlights mark summary phrases that contradict or overstep the source (e.g., claiming the user \emph{avoids} caffeine despite the source stating they \emph{haven’t given up coffee}).}
\label{fig:8b_coffee_grounded_highlighted}
\end{figure*}

\newpage

\begin{figure*}[t]
\centering
\small

\begin{tcolorbox}[colback=gray!5,colframe=black!60,title=\textbf{Prompt (Chat completion)}]
\hlp{Write a feature article on the benefits of art therapy for those with chronic pain.}
\end{tcolorbox}

\vspace{6pt}

\begin{tabular}{p{0.48\linewidth} p{0.48\linewidth}}
\toprule

\multicolumn{2}{c}{\textbf{Early Training (Step 50)}} \\
\midrule
\textbf{Ranking Model} & \textbf{Regular Model} \\

For many people living with chronic pain, relief is not just physical—it is emotional and deeply personal. \hlok{Art therapy offers a space where pain can be expressed visually when words fall short}, allowing individuals to process frustration, grief, and fear. Through painting, drawing, or sculpting, patients often rediscover a sense of control and identity beyond their diagnosis. \hlok{The creative process can transform the experience of pain from something endured in silence to something shaped into meaning}. (98 words)
&
Art therapy can provide several benefits for individuals with chronic pain. It may help \hlbad{reduce stress and anxiety}, improve mood, and support emotional expression. Engaging in creative activities can serve as a distraction from physical discomfort and promote relaxation. Art therapy may also enhance self-esteem and encourage social interaction when done in group settings. These psychological and emotional benefits can contribute to better overall well-being for people managing long-term pain. (97 words)
\\

\midrule
\multicolumn{2}{c}{\textbf{Mid Training (Step 150)}} \\
\midrule
\textbf{Ranking Model} & \textbf{Regular Model} \\

Chronic pain often reshapes a person’s daily life, limiting activities and altering self-image. \hlok{Art therapy helps individuals rebuild a sense of self by turning inward experiences into tangible forms}, whether through color, texture, or symbolism. The act of creating can foster calm, but it also gives people a narrative tool to explore what their pain means. \hlok{Rather than focusing solely on symptoms, art therapy centers the person behind the pain}, encouraging resilience and emotional healing. (99 words)
&
Art therapy is a supportive approach that may benefit those with chronic pain. Creative expression can help people cope with difficult emotions and provide a calming activity. \hlbad{Participating in art-making may lower stress levels and improve overall mood}. It can also offer a healthy outlet for feelings that are hard to verbalize. By promoting relaxation and emotional release, art therapy can complement medical treatment and support quality of life. (97 words)
\\

\midrule
\multicolumn{2}{c}{\textbf{Late Training (Step 250)}} \\
\midrule
\textbf{Ranking Model} & \textbf{Regular Model} \\

Living with chronic pain can make the world feel smaller, but art therapy can gently expand it again. \hlok{Through creative work, individuals can externalize their pain, giving shape and color to experiences that are otherwise invisible}. This process not only soothes the nervous system but also restores a sense of agency. \hlok{Art therapy reframes patients as creators rather than sufferers}, helping them reconnect with purpose, identity, and moments of joy despite ongoing physical challenges. (99 words)
&
Art therapy offers mental and emotional benefits for people experiencing chronic pain. Engaging in artistic activities can \hlbad{promote relaxation, reduce stress, and improve emotional well-being}. It may also provide a positive distraction from discomfort and encourage mindfulness. Many people find that expressing themselves creatively helps them manage difficult feelings. As part of a broader care plan, art therapy can support coping skills and contribute to a better quality of life. (96 words)
\\

\bottomrule
\end{tabular}

\vspace{6pt}
\caption{
\textbf{Grounded qualitative comparion (base model: \texttt{Qwen-4B}) on  feature-style health writing \texttt{UltraChat} prompt.}
Yellow highlight marks the task of writing a \emph{feature article} about art therapy for chronic pain.
Green highlights show where the ranking model adopts a narrative, human-centered perspective—focusing on identity, meaning-making, agency, and lived experience.
Red highlights show generic therapeutic benefit language (e.g., stress reduction, mood improvement, relaxation) that could describe many interventions and does not reflect feature-style storytelling.
% Across training, the ranking model increasingly frames art therapy as an experiential and identity-restoring process, while the regular model remains more informational and clinically general.
}
\label{fig:4b_art_therapy_feature_grounded}
\end{figure*}

\newpage

\end{document}